\newcommand{\rulesep}{\unskip\ \vrule\ }
\newtheorem{theorem}{Theorem}
\newtheorem{lemma}[theorem]{Lemma} 
\newtheorem{remark}[theorem]{Remark}
\newtheorem{corollary}[theorem]{Corollary}
\newtheorem{definition}[theorem]{Definition}
\title{On the Stochastic Stability of Deep Markov Models}
\author{J\'an Drgo\v na$^1$, Sayak Mukherjee$^1$,   Jiaxin Zhang$^2$, Frank Liu$^2$, Mahantesh Halappanavar$^1$ \\
    $^1$ Pacific Northwest National Laboratory\\
	Richland, Washington, USA\\
	    $^2$ Oak Ridge National Laboratory\\
	Oak Ridge, Tennessee, USA\\ 
 \texttt{\{jan.drgona, sayak.mukherjee, mahantesh.halappanavar\}@pnnl.gov} \\
  \texttt{\{zhangj, liufy\}@ornl.gov}
}
\begin{document}

\maketitle

\begin{abstract}
Deep Markov models (DMM) are generative models that are scalable and expressive generalization of Markov models for representation, learning, and inference problems. However, the fundamental stochastic stability guarantees of such models have not been thoroughly investigated. In this paper, we provide sufficient conditions of DMM's stochastic stability as defined in the context of dynamical systems and propose a stability analysis method based on the contraction of probabilistic maps modeled by deep neural networks. We make connections between the spectral properties of neural network's weights and different types of used activation functions on the stability and overall dynamic behavior of DMMs with Gaussian distributions. Based on the theory, we propose a few practical methods for designing constrained DMMs with guaranteed stability. We empirically substantiate our theoretical results via intuitive numerical experiments using the proposed stability constraints.

\end{abstract}

\section{Introduction}
\label{sec:intro}
% \add{HMM introduction:} 
Modeling, analysis, and control of dynamical systems are of utmost importance for various physical and engineered systems such as fluid dynamics, oscillators, power grids, transportation networks, and autonomous driving, to name just a few. The systems are generally subjected to uncertainties arising from a plethora of factors such as exogenous noise, plant-model mismatch, and unmodeled system dynamics, which have led researchers to model the dynamics in stochastic frameworks. One of the most commonly used probabilistic frameworks to model dynamical system is the Hidden Markov Models (HMMs) \citep{rabiner1986introduction,eddy1996hidden} which in their vanilla form have been extensively investigated for representation, learning, and inference problems \citep{ghahramani1997factorial,cappe2006inference,beal2002infinite}. One of its variants, the Gaussian state space models, have been used in the systems and control community for decades \citep{beckers2016stability,eleftheriadis2017identification}. 

It has been shown that expressivity of Markov models to emulate complex dynamics and sequential behaviors is greatly improved by parametrizing such models using deep neural networks, giving rise to Deep Markov Models (DMMs)~ \citep{krishnan2016structured}.
Main research activities have been focused on the inference of these models. In particular, works such as \cite{awiszus2018markov, liu2019powering,mustafa2019comparative,qu2019gmnn} proposed parametrizing the probability distributions using deep neural networks for modeling various complex dynamical systems. 
Despite the rising popularity of DMMs,  many of their theoretical properties  such as robustness to perturbations, and stability of the generated trajectories remain open research questions.
Many natural systems exhibit complex yet stable dynamical behavior that is described by converging trajectories towards an attractor set~\citep{Brayton1979}. Additionally, safety-critical systems such as an autonomous driving call for formal verification methods to ensure safe operation.
Thus the ability to guarantee stability in data-driven models could lead to improved generalization or act as a safety certificate in real-world applications.

In this paper, we propose a new analytical method to assess stochastic stability of DMMs.
More specifically, we utilize spectral analysis of deep neural networks (DNNs) modeling DMM's distributions. 
This allows us to make connections between the stability of deterministic DNNs and the stochastic stability of deep Markov models.
As a main theoretical contribution we provide sufficient conditions for stochastic stability of DMMs. Based on the proposed theory we introduce several practical methods for design of constrained DMM with stability guarantees.
In summary, the main contributions of this paper include:
\begin{enumerate}[topsep=0pt,itemsep=-1ex,partopsep=1ex,parsep=1ex]
    \item \textbf{Stability analysis method for deep Markov models:} we base the analysis on the operator norms of deep neural networks modeling mean and variance of the DMM's distributions.
    \item \textbf{Sufficient conditions for stochastic stability of deep Markov models:} we show the sufficiency of the operator norm-based contraction conditions for DMM's deep neural networks by leveraging Banach fixed point theorem.
    \item \textbf{Stability constrained deep Markov models:} 
    we introduce a set of methods for the design of provably stable deep Markov models.
    \item \textbf{Numerical case studies:} we analyze connections between the design parameters of neural networks, stochastic stability, and operator norms of deep Markov models.
\end{enumerate}

\section{Related Work}
\label{sec:literature}

Deep Markov models (DMMs) have been used as a
scalable and expressive generalization of Hidden Markov Models (HMM) for learning probabilistic generative models of complex high-dimensional dynamical systems from sequential data~\citep{rezende2014stochastic,krishnan2016structured,fraccaro2016sequential}.
DMMs have been successfully applied to speech recognition problems~\citep{Li_dmm2013,prasetio_deep_2020},
control problems~\citep{ShashuaM17},
human pose forecasting~\citep{toyer2017human}, fault detection~\citep{WANG201865}, climate data forecasting~\citep{pmlr-v80-che18a}, molecular dynamics~\citep{wu2019deep}, or as internal models in model-based deep RL applied to automatic trading~\citep{ferreira2020reinforced}.
Several modifications of DMMs have been proposed to handle
incomplete data~\citep{Tan_DMM_2019}, or
multi-rate time series~\citep{pmlr-v80-che18a} 
 multivariate time series \citep{montanez2015inertial}, training DMMs in unsupervised settings~\citep{tran-etal-2016-unsupervised}, or architectures inspired by Kalman Filters~\citep{krishnan2015deep,ShashuaM17,RKN2019}.
 However, works focusing on formal analysis to ensure stability guarantees for DMMs are missing.

Stability notions and analysis for stochastic dynamic systems have been studied in the automatic control literature in various forms, depending on the representation of the system dynamics. Some classical results on stochastic stability for analysis and control can be found in \cite{mclane1971optimal, willems1976feedback}. These results are presented mainly for stochastic differential equation (SDE) models. \cite{Khasminskii2012} discusses different notions of stochastic stability, among them mean-square based stability notions have gained interest in works such as  
\cite{lu2002mean,farmer2009understanding,elia2013mean,nandanoori2018mean,wu2019switched}. We resort to such mean-square based stability notions when analyzing probabilistic state transition models parametrized by deep neural networks. We show that when the stochastic transitions are modeled by DNNs, the probabilistic stability requirements can be translated to deterministic stability notions of nonlinear discrete-time dynamics \citep{khalil}.

In recent years, deep neural networks have been extensively studied from the viewpoint of dynamical systems~\citep{NIPS2018_7892_NeuralODEs,Raissi_PINN_p1_2017,NAISnet2018}, allowing for the
application of stability analysis methods to DNNs.
For instance, \cite{NIPS2019_9292} proposed neural Lyapunov function to stabilize learned neural dynamics of  autonomous system.
% Other authors employ spectral analysis of neural network's Jacobians to study their stability properties.
\cite{haber2017stable} interpret residual connections in neural networks as Euler discretization of ODEs 
and provide stability guarantees of ResNet.
\cite{goel2017eigenvalue} makes connections between eigenvalue decay and learnability of neural networks.
% \cite{drgona2020spectral}  provides stability guarantees for feed-forward neural networks based on spectral properties of the layer weights. 
\cite{engelken2020lyapunov,vogt2020lyapunov} studies the Lyapunov spectrum to of the input-output Jacobian of recurrent neural networks (RNNs) to assess RNN's stability.
In this work we leverage advances in the spectral analysis of deep neural networks and apply them to derive stochastic stability guarantees for DMMs.

Besides analytical methods, many authors introduced provably stable neural architectures~\citep{IMEXnet2019,HamiltonianDNNs2019,cranmer2020lagrangian} or stability  constraints~\citep{john2017gabor}.
Another popular strategy is to employ stabilizing regularizations. 
This can be achieved by minimizing eigenvalues of squared weights~\citep{ludwig2014eigenvalue},  
using symplectic weights~\citep{haber2017stable}, orthogonal parametrizations~\citep{mhammedi2017efficient}, Perron-Frobenius theorem~\citep{tuor2020constrained}, Gershgorin dics theorem~\citep{lechner2020gershgorin},
or via singular value  decomposition (SVD)~\citep{zhang2018stabilizing}.
In this paper we leverage different weight factorization to empirically validate the proposed theoretical guarantees on stochastic stability of DMMs.

%\newpage
%\clearpage
%\add{Jiaxin - something else? General stability and spectral analysis? eigenvalue and jacobians calculations? - Yes lets focus on reviewing Jacobians and spectral analysis of DNNs}

% \add{Several related references}
% \begin{itemize}
    % \item ICLR 2021 (rejected) - On the Invertibility of Invertible Neural Networks.  \url{https://openreview.net/pdf?id=BJlVeyHFwH}
    % \item AAAI 2017 - Structured Inference Networks for Nonlinear State Space Models.  \url{https://arxiv.org/pdf/1609.09869.pdf} 
    % pyro implementation \url{https://pyro.ai/examples/dmm.html}, Github \url{https://github.com/clinicalml/dmm}
%     \item NeurIPS 2018 - Deep Generative Markov State Models. \url{https://papers.nips.cc/paper/2018/file/deb54ffb41e085fd7f69a75b6359c989-Paper.pdf}. 
% \end{itemize}

% Some references to go through:
% https://pyro.ai/examples/dmm.html
% http://www.gatsby.ucl.ac.uk/~gretton/papers/GruLevBalPonetal12.pdf
% http://math.univ-lyon1.fr/~attal/markov_dyn.pdf
% http://www.statslab.cam.ac.uk/~rrw1/markov/M.pdf
% http://www.offconvex.org/2016/04/04/markov-chains-dynamical-systems/

\section{Methodology}
\label{sec:methods}
This section presents stochastic stability analysis method for deep Markov models (DMM). 
First, we demonstrate the equivalence of DNNs with pointwise affine (PWA) functions.
Next, we  recall the definition of DMM with transition probabilities modeled  by deep neural networks (DNNs).
 We introduce definitions of stochastic stability and show how can we leverage deterministic stability analysis in the probabilistic context. 
Finally, we will leverage the equivalence of DNNs with PWA maps to pose sufficient stability stochastic conditions for DMMs based on contraction of PWA maps.

\subsection{Equivalence of Deep Neural Networks with Pointwise Affine Maps}
Let us consider deep neural network (DNN)
$\boldsymbol \psi: \mathbb{R}^n \rightarrow \mathbb{R}^m$ parametrized by  $\theta_{\boldsymbol \psi} = \{\mathbf{A}_0^{\boldsymbol \psi}, \ldots \mathbf{A}_{L}^{\boldsymbol \psi}, \mathbf{b}_{0}, \ldots, 
\mathbf{b}_{L}\}$ 
with hidden layers $1\leq l\leq L$ with bias given as follows:
\begin{subequations}
    \label{eq:dnn_map}
    \begin{align}
    \boldsymbol \psi_{\theta_{\boldsymbol \psi}}(\mathbf{x}) & =  \mathbf{A}_{L}^{\boldsymbol \psi} \mathbf{h}_L^{\boldsymbol \psi} + \mathbf{b}_L\\
    \mathbf{h}_{l}^{\boldsymbol \psi} &= \boldsymbol v(\mathbf{A}_{l-1}^{\boldsymbol \psi} \mathbf{h}_{l-1}^{\boldsymbol \psi} + \mathbf{b}_{l-1})
 \end{align}
\end{subequations}
 with $ \mathbf{h}_0^{\boldsymbol \psi} = \mathbf{x}$, and  $\boldsymbol v: \mathbb{R}^{n_z} \rightarrow \mathbb{R}^{n_z}$ representing element-wise application of an activation function to vector elements such that $\boldsymbol v(\mathbf{z}) := \begin{bmatrix} \boldsymbol v(z_1)\hdots  \boldsymbol v(z_{n_z})\end{bmatrix}^{\intercal}$. 
 \begin{lemma}
 \label{lem:lpv}
 For a multi-layer feedforward neural network $\boldsymbol \psi_{\theta_{\boldsymbol \psi}}$~\eqref{eq:dnn_map} with arbitrary activation
function $\boldsymbol v$, there exists an equivalent pointwise affine map (PWA) parametrized by $\mathbf{x}$ which satisfies:
\begin{align}
 \label{eq:lpv_dnn}
    \boldsymbol \psi_{\theta_{\boldsymbol \psi}}(\mathbf{x}) = \mathbf{A}_{\boldsymbol \psi}(\mathbf{x}) \mathbf{x} + \mathbf{b}_{\boldsymbol \psi}(\mathbf{x}).
\end{align}
% Here $\mathbf{A}_{\boldsymbol \psi}(\mathbf{x})$ represents a matrix, and $\mathbf{b}_{\boldsymbol \psi}(\mathbf{x})$ represents a  bias vector, each parametrized by inputs $\mathbf{x}$.

Where  $\mathbf{A}_{\boldsymbol \psi}(\mathbf{x})$ is a parameter varying matrix given as:
\begin{equation}
  \label{eq:dnn_LPV}
   \mathbf{A}_{\boldsymbol \psi}(\mathbf{x})\mathbf{x}= \mathbf{A}^{\boldsymbol \psi}_L \boldsymbol\Lambda^{\boldsymbol \psi}_{\mathbf{z}_{L-1}} \mathbf{A}^{\boldsymbol \psi}_{L-1} \ldots \boldsymbol \Lambda^{\boldsymbol \psi}_{\mathbf{z}_{0}} \mathbf{A}^{\boldsymbol \psi}_0 \mathbf{x}
 \end{equation}
And $\mathbf{b}_{\boldsymbol \psi}(\mathbf{x})$ is a parameter varying vector, both parametrized by input vector $\mathbf{x}$ given by following recurrent formula:
  \begin{eqnarray}
  \label{eq:dnn_bias_recurence}
 \mathbf{b}_{\boldsymbol \psi}(\mathbf{x}) =  \mathbf{b}^{\boldsymbol \psi}_{L}  \\
  \mathbf{b}^{\boldsymbol \psi}_{l} := \mathbf{A}^{\boldsymbol \psi}_i \boldsymbol\Lambda^{\boldsymbol \psi}_{\mathbf{z}_{l-1}} \mathbf{b}^{\boldsymbol \psi}_{l-1} +\mathbf{b}_{l} +\boldsymbol\sigma_{l-1}(\mathbf{0}), \ \ l \in \mathbb{N}_1^{L}
 \end{eqnarray}
  with $ \mathbf{b}^{\boldsymbol \psi}_{0} = \mathbf{b}_{0}$, and  $i$ representing index of the network layer.
 Here $\boldsymbol \Lambda^{\boldsymbol \psi}_{\mathbf{z}_{l}}$ represents parameter varying diagonal matrix of activation patterns defined as:
   \begin{equation}
\label{eq:lambda_matrix}
\boldsymbol\sigma(\mathbf{z})  =  \begin{bmatrix}
   \frac{\sigma(z_1) -\sigma(0)}{z_1} &  & \\
    & \ddots & \\
     &  &  \frac{\sigma(z_n)-\sigma(0)}{z_n} 
  \end{bmatrix}\mathbf{z} + \begin{bmatrix}\sigma(0)\\\vdots \\\sigma(0)\end{bmatrix} = \boldsymbol \Lambda^{\boldsymbol \psi}_{\mathbf{z}}  \mathbf{z} + \boldsymbol\sigma(\mathbf{0}) 
\end{equation} 
 \end{lemma}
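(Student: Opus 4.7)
The proof hinges on the algebraic identity (7), which rewrites the elementwise activation $\boldsymbol v(\mathbf{z})$ as an exactly affine function of $\mathbf{z}$: each scalar activation $\sigma(z_i)$ equals $\lambda_i(z_i)\,z_i + \sigma(0)$ with $\lambda_i(z_i) := (\sigma(z_i)-\sigma(0))/z_i$, and the $\lambda_i$'s are collected into the diagonal matrix $\boldsymbol\Lambda^{\boldsymbol\psi}_{\mathbf{z}}$. The identity requires no structural assumption on $\sigma$ beyond being a function, which is exactly what allows the lemma to apply to an \emph{arbitrary} activation. At coordinates where $z_i = 0$ one extends $\lambda_i(0)$ by continuity (or by any value, since the product $\lambda_i(0)\cdot 0$ vanishes regardless), so the identity holds pointwise on all of $\mathbb{R}^{n_z}$.

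With this device in hand, the plan is to proceed by induction on the hidden-layer index $l$, showing that each $\mathbf{h}_l^{\boldsymbol\psi}$ admits a PWA representation of the form $\mathbf{h}_l^{\boldsymbol\psi} = \mathbf{M}_l(\mathbf{x})\,\mathbf{x} + \boldsymbol\beta_l(\mathbf{x})$ for some input-dependent matrix $\mathbf{M}_l(\mathbf{x})$ and vector $\boldsymbol\beta_l(\mathbf{x})$. The base case $l=0$ is immediate from $\mathbf{h}_0^{\boldsymbol\psi} = \mathbf{x}$ (i.e.\ $\mathbf{M}_0 = \mathbf{I}$, $\boldsymbol\beta_0 = \mathbf{0}$). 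For the inductive step, I substitute the hypothesis into (1b) and apply (7) to the preactivation $\mathbf{z}_{l-1} = \mathbf{A}_{l-1}^{\boldsymbol\psi}\mathbf{h}_{l-1}^{\boldsymbol\psi} + \mathbf{b}_{l-1}$, obtaining
\begin{equation*}
\mathbf{h}_l^{\boldsymbol\psi} = \boldsymbol\Lambda^{\boldsymbol\psi}_{\mathbf{z}_{l-1}}\mathbf{A}_{l-1}^{\boldsymbol\psi}\mathbf{M}_{l-1}(\mathbf{x})\,\mathbf{x} \;+\; \boldsymbol\Lambda^{\boldsymbol\psi}_{\mathbf{z}_{l-1}}\bigl(\mathbf{A}_{l-1}^{\boldsymbol\psi}\boldsymbol\beta_{l-1}(\mathbf{x}) + \mathbf{b}_{l-1}\bigr) + \boldsymbol v(\mathbf{0}),
\end{equation*}
which is again PWA, with $\mathbf{M}_l = \boldsymbol\Lambda^{\boldsymbol\psi}_{\mathbf{z}_{l-1}}\mathbf{A}_{l-1}^{\boldsymbol\psi}\mathbf{M}_{l-1}$ and a bias that accumulates recursively.

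Unrolling at $l=L$ yields $\mathbf{M}_L(\mathbf{x}) = \boldsymbol\Lambda^{\boldsymbol\psi}_{\mathbf{z}_{L-1}}\mathbf{A}^{\boldsymbol\psi}_{L-1}\cdots \boldsymbol\Lambda^{\boldsymbol\psi}_{\mathbf{z}_0}\mathbf{A}^{\boldsymbol\psi}_0$, and composing once more with the output map $\mathbf{A}_L^{\boldsymbol\psi}(\cdot) + \mathbf{b}_L$ recovers the matrix product (5) together with the bias recurrence (6), after identifying $\mathbf{b}^{\boldsymbol\psi}_L = \mathbf{A}_L^{\boldsymbol\psi}\boldsymbol\beta_L(\mathbf{x}) + \mathbf{b}_L$. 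I do not anticipate any genuine obstacle: the argument is essentially layerwise bookkeeping once (7) is available, and all the conceptual content sits in that single identity. The only care required is in handling the $z_i = 0$ edge case of $\boldsymbol\Lambda^{\boldsymbol\psi}_{\mathbf{z}}$ (harmless, since the singular factor is always multiplied by $z_i$) and in tracking the indices through the bias recurrence, which is purely notational.
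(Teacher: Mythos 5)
Your proposal is correct and follows essentially the same route as the paper's proof: both rest on the pointwise affine rewriting $\boldsymbol\sigma(\mathbf{z}) = \boldsymbol\Lambda^{\boldsymbol\psi}_{\mathbf{z}}\mathbf{z} + \boldsymbol\sigma(\mathbf{0})$ applied layerwise and then composed, with your induction on $l$ being just a tidier organization of the paper's direct unrolling. Your explicit treatment of the $z_i = 0$ case is a small but worthwhile addition that the paper leaves implicit.
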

 
  \begin{proof}
 First lets observe the following:
  \begin{equation}
\label{eq:lambda_matrix_steps}
\boldsymbol\sigma(\mathbf{z})  =  \begin{bmatrix}\sigma(z_1)\\\vdots \\\sigma(z_n)\end{bmatrix} = 
 \begin{bmatrix}\frac{z_1 (\sigma(z_1)-\sigma(0)+\sigma(0))}{z_1} \\\vdots\\ \frac{z_n (\sigma(z_n)-\sigma(0)+\sigma(0))}{z_n}\end{bmatrix}  =  \begin{bmatrix}
   \frac{\sigma(z_1) -\sigma(0)}{z_1} &  & \\
    & \ddots & \\
     &  &  \frac{\sigma(z_n)-\sigma(0)}{z_n} 
  \end{bmatrix}\mathbf{z} + \begin{bmatrix}\sigma(0)\\\vdots \\\sigma(0)\end{bmatrix} 
\end{equation} 
Remember  $\sigma(0)-\sigma(0) = 0$, and $\frac{z_i}{z_i} = 1$ are identity elements of addition and multiplication, respectively.
Thus~\eqref{eq:lambda_matrix_steps} demonstrates the equivalence $\boldsymbol\sigma(\mathbf{z}) = \boldsymbol \Lambda^{\boldsymbol \psi}_{\mathbf{z}} \mathbf{z} + \boldsymbol\sigma(\mathbf{0})$  as given in~\eqref{eq:lambda_matrix}.
Then if we let $\mathbf{z}_{l} =\mathbf{A}^{\boldsymbol \psi}_l\mathbf{x}_l + \mathbf{b}_{l}$, we can represent a  neural network layer in a parameter varying affine form:
\begin{equation}
 \label{eq:dnn_LPV_layer}
\boldsymbol\sigma_l(\mathbf{A}^{\boldsymbol \psi}_l\mathbf{x}_l  + \mathbf{b}_{l}) =  \boldsymbol \Lambda^{\boldsymbol \psi}_{\mathbf{z}_{l}} (\mathbf{A}^{\boldsymbol \psi}_l\mathbf{x}_l  + \mathbf{b}_{l}) + \boldsymbol\sigma(\mathbf{0}) =  \boldsymbol \Lambda^{\boldsymbol \psi}_{\mathbf{z}_{l}}\mathbf{A}^{\boldsymbol \psi}_l\mathbf{x}_l  + \boldsymbol \Lambda^{\boldsymbol \psi}_{\mathbf{z}_{l}}\mathbf{b}_{l} + \boldsymbol\sigma_l(\mathbf{0})
 \end{equation}
 Now for simplicity of exposition lets assume only activations with trivial null space, i.e. $\sigma(0) = 0$. Thus $\boldsymbol\sigma(\mathbf{z}) = \boldsymbol \Lambda^{\boldsymbol \psi}_{\mathbf{z}} \mathbf{z}$.
By composition, a DNN $ \boldsymbol \psi_{\theta_{\boldsymbol \psi}}(\mathbf{x})$ can now be formulated as a parameter-varying affine map
 $\mathbf{A}_{\boldsymbol \psi}(\mathbf{x}) \mathbf{x} + \mathbf{b}_{\boldsymbol \psi}(\mathbf{x})$, parametrized by input $\mathbf{x}$
\begin{equation}
\begin{aligned}
  \label{eq:dnn_LPV_bias}
     &   \boldsymbol \psi_{\theta_{\boldsymbol \psi}}(\mathbf{x}) :=   \mathbf{A}_{\boldsymbol \psi}(\mathbf{x}) \mathbf{x} + \mathbf{b}_{\boldsymbol \psi}(\mathbf{x}) = \\
    &  \mathbf{A}^{\boldsymbol \psi}_L \boldsymbol\Lambda^{\boldsymbol \psi}_{\mathbf{z}_{L-1}}( \mathbf{A}^{\boldsymbol \psi}_{L-1} ( \ldots   \boldsymbol\Lambda^{\boldsymbol \psi}_{\mathbf{z}_{1}}( \mathbf{A}^{\boldsymbol \psi}_1 \boldsymbol\Lambda^{\boldsymbol \psi}_{\mathbf{z}_{0}}(\mathbf{A}^{\boldsymbol \psi}_0\mathbf{x} + \mathbf{b}_0) + \mathbf{b}_1) \ldots ) + \mathbf{b}_{L-1})\mathbf{x} + \mathbf{b}_L  \\
      & \mathbf{A}_{\boldsymbol \psi}(\mathbf{x}) \mathbf{x} = \mathbf{A}^{\boldsymbol \psi}_L \boldsymbol\Lambda^{\boldsymbol \psi}_{\mathbf{z}_{L-1}} \mathbf{A}^{\boldsymbol \psi}_{L-1} \ldots \boldsymbol \Lambda^{\boldsymbol \psi}_{\mathbf{z}_{0}} \mathbf{A}^{\boldsymbol \psi}_{0} \mathbf{x}    \\
      & \mathbf{b}_{\boldsymbol \psi}(\mathbf{x}) = \mathbf{A}^{\boldsymbol \psi}_{L}   \ldots  \mathbf{A}^{\boldsymbol \psi}_{2} \boldsymbol \Lambda^{\boldsymbol \psi}_{\mathbf{z}_{1}} \mathbf{A}^{\boldsymbol \psi}_{1} \boldsymbol \Lambda^{\boldsymbol \psi}_{\mathbf{z}_{0}}\mathbf{b}_{0} +  \mathbf{A}^{\boldsymbol \psi}_{L}   \ldots  \mathbf{A}^{\boldsymbol \psi}_{2} \boldsymbol \Lambda^{\boldsymbol \psi}_{\mathbf{z}_{1}} \mathbf{A}^{\boldsymbol \psi}_{1}  \boldsymbol\sigma_{0}(\mathbf{0}) +\\
      & \mathbf{A}^{\boldsymbol \psi}_{L}   \ldots   \mathbf{A}^{\boldsymbol \psi}_{2} \boldsymbol \Lambda^{\boldsymbol \psi}_{\mathbf{z}_{1}} \mathbf{b}_{1} +  \mathbf{A}^{\boldsymbol \psi}_{L}   \ldots   \mathbf{A}^{\boldsymbol \psi}_{2} \boldsymbol\sigma_{1}(\mathbf{0}) + \ldots +  \mathbf{A}^{\boldsymbol \psi}_{L} \boldsymbol \Lambda^{\boldsymbol \psi}_{\mathbf{z}_{L-1}} \mathbf{b}_{L-1} + \mathbf{A}^{\boldsymbol \psi}_{L}  \boldsymbol\sigma_{L-1}(\mathbf{0}) + \mathbf{b}_{L}
\end{aligned}
 \end{equation}
Hence, each input feature vector $\mathbf{x}$ generates
a unique affine map $ \mathbf{A}_{\boldsymbol \psi}(\mathbf{x}) \mathbf{x} + \mathbf{b}_{\boldsymbol \psi}(\mathbf{x})$  of the DNN $  \boldsymbol \psi_{\theta_{\boldsymbol \psi}}(\mathbf{x}) $. Thus proving the equivalence of DNN map~\eqref{eq:dnn_map} with the form~\eqref{eq:lpv_dnn}.
The case with $\boldsymbol\sigma(\mathbf{0}) \neq 0$ can be derived following the same algebraic operations as as above.
\end{proof}

\subsection{Deep Markov Models} \label{sec:DMMs}

We consider a dynamical system with latent state variables $\mathbf{x}_t \in \mathbb{R}^n$, and the observed variables $\mathbf{y}_t \in \mathbb{R}^m$. The transition from $\mathbf{x}_t$ to the next time step $\mathbf{x}_{t+1}$, and the outputs $\mathbf{y}_t$ are modeled by probabilistic transitions. Over a horizon of $T$ time steps with a step size $\Delta t$, we assume the Markov property to embed structural independence conditions in the dynamic state evolution, i.e.,
\begin{align} \label{markov_prop}
    \mathbf{x}_{t+1} \perp \mathbf{x}_{0:t-1} \; | \; \mathbf{x}_{t},
\end{align}
Thus having latent state dynamics characterized by one-time-step conditional distribution $P(\mathbf{x}_{t+1} | \mathbf{x}_t)$. The joint distribution over the latent states and the observations is given by,
\begin{align}\label{joint_dist}
    P(\mathbf{x}_{0:T}, \mathbf{y}_{0:T}) = P(\mathbf{x}_0) P(\mathbf{y}_0 |\mathbf{x}_0)\prod_{t=0}^{T-1} P(\mathbf{x}_{t+1} | \mathbf{x}_t)P(\mathbf{y}_t | \mathbf{x}_t).
\end{align}
More explicitly, we consider the following probabilistic transition and the emission maps:
\begin{subequations}
    \label{eq:DMM}
\begin{align}\label{eq:transition}
    \mathbf{x}_{t+1} & \sim \mathcal{N}(K_{\alpha}(\mathbf{x}_t, \Delta t), L_{\beta}(\mathbf{x}_t, \Delta t)) & \text{(Transition)}\\
    \mathbf{{y}}_{t} & \sim \mathcal{M} (F_{\kappa}(\mathbf{x}_{t})) & \text{(Emission)}
\end{align}
\end{subequations}
with the initial condition $\mathbf{x}_{t=0} = \mathbf{x}_0.$  Here,
$\mathcal{N}$ and $\mathcal{M}$ denote the probability distributions. For the transition mapping, $\mathcal{N}$ denotes a Gaussian distribution with the mean vector $K_{\alpha}(\mathbf{x}_t, \Delta t)$, and covariances $L_{\beta}(\mathbf{x}_t, \Delta t)$. The distribution $\mathcal{M}$ can be arbitrary with its distribution characterized by the map $F_{\kappa}(\mathbf{x}_{t})$. In this paper, we are interested in the stability characterizations of the latent state dynamics given by transition map~\eqref{eq:transition}, thereby assuming full state observability.  
 
We are interested in expressing the dynamics of complex systems using \eqref{eq:DMM}, therefore it is suitable to expand the expressivity of our model by parametrizing the conditional distribution $P(\mathbf{x}_{t+1} | \mathbf{x}_t)$ by deep neural networks (DNNs) given as,
\begin{align}
    K_{\alpha}(\mathbf{x}_{t}, \Delta t) = \mathbf{f}_{\theta_{\mathbf{f}}}(\mathbf{x}_t), \label{eq:dnn_dyn} \\
    \mbox{vec}(L_{\beta}(\mathbf{x}_{t}, \Delta t)) = \mathbf{g}_{\theta_{\mathbf{g}}}(\mathbf{x}_t), \label{eq:dnn_dyn2}
\end{align}
where $\mathbf{f}:  \mathbb{R}^n \rightarrow \mathbb{R}^n$ and $\mathbf{g}: \mathbb{R}^n \rightarrow \mathbb{R}^{n^2}$ are two deep neural networks~\label{eq:dnn_map} parametrized by $\theta_{\mathbf{f}}$ and $\theta_{\mathbf{g}}$, respectively. And $\mbox{vec}(\cdot)$ denotes standard vectorization operation. Therefore, the probabilistic transition dynamics \eqref{eq:transition} can be characterized by analysing the stability and boundedness of deep neural networks $\mathbf{f}$ and $\mathbf{g}$.

\subsection{Stability of Deep Markov Models} \label{sec:Stability_DMM}

To this end, we bring forth a few stability notions in the context of stochastic state transitions. In stochastic dynamics and control literature \citep{Khasminskii2012}, various different notions of stability for stochastic state transitions, such as mean-square stability, almost-sure stability and stability via convergence in probability, have been discussed. In this article, since we are interested in the latent state trajectories of the dynamic systems, we consider the mean-square stability as defined in~\citep{willems1976feedback,nandanoori2018mean}. The dynamic system is said to be mean-square stable if the first and the second moment converge over time. 
\begin{definition}\label{def:MSS}
 The stochastic process $\mathbf{x}_t \in \mathbb{R}^n$ is mean-square stable (MSS) if and only if there exists $\mathbf{\mu} \in \mathbb{R}^n, \mathbf{\Sigma} \in \mathbb{R}^{n \times n},$ such that $\lim_{t \to \infty}\mathbb{E}(\mathbf{x}_t) = \mathbf{\mu}$, and $\lim_{t \to \infty}\mathbb{E}(\mathbf{x}_t\mathbf{x}_t^T) = \mathbf{\Sigma}$.
\end{definition}

The MSS condition from Definition \ref{def:MSS} requires the dynamics \eqref{eq:dnn_dyn} to have stable equilibrium $\bar{\mathbf{x}}_e = \mathbf{\mu}$, where $\bar{\mathbf{x}}$ denotes the mean state vector. The definition also requires the covariance to converge. To express the dynamic behavior of complex system, the second moment convergence criterion can be relaxed by only requiring it to be norm bounded in order to ensure stochastic stability given as,
\begin{equation}
\label{eq:g_bounded}
    \|{\mathbf{g}_{\theta_\mathbf{g}}(\mathbf{x}_t)}\|_p < K, \ K>0, \ \forall t.
\end{equation}
Here $\|\cdot \|_p$ denotes any appropriate vector norm, e.g., $L2-$norm. 
The bound on the covariances will depend on the extent of stochasticity that the dynamic system encounters in an uncertain environment. However, in the later parts, we consider the convergence scenario for $\mathbf{g}$ as in Definition \ref{def:MSS}, rather than merely boundedness, in prescribing the sufficient conditions for the MSS-type stability. 

Let us first analyze the mean dynamics characterized by \eqref{eq:dnn_dyn}, and its equilibrium $\bar{\mathbf{x}}_e = \mu$ which 
satisfies the stationarity condition $\mathbf{f}_{\theta_\mathbf{f}}(\bar{\mathbf{x}}_e) = \bar{\mathbf{x}}_e$. We have the mean state vector $\bar{\mathbf{x}}_t$ evolving under the following  dynamics:
\begin{align}\label{eq:mean_dyn}
    \bar{\mathbf{x}}_{t+1} = \mathbf{f}_{\theta_\mathbf{f}}(\bar{\mathbf{x}}_t).
\end{align}
\eqref{eq:mean_dyn} allows us to analyze the asymptotic stability
of the DMM mean dynamics. 
Now in the main result of this section we leverage the fact that the dynamic characteristics of the deep neural networks $\mathbf{f}_{\theta_\mathbf{f}}$, $\mathbf{g}_{\theta_\mathbf{g}}$ around a point $\bar{\mathbf{x}}_t$ can be evaluated by obtaining their exact pointwise affine forms (PWA)~\eqref{eq:lpv_dnn}. 
Based on this equivalence we formulate Theorem~\ref{thm:DMM_stability_1} and Corollary~\ref{thm:DMM_stability_2} as follows with  sufficient conditions for the stability of deep Markov models. 

\begin{theorem}
\label{thm:DMM_stability_1}
The deep Markov model \eqref{eq:DMM} which is parametrized by deep neural networks  \eqref{eq:dnn_dyn}-\eqref{eq:dnn_dyn2} remains globally stable in the mean-square sense if the following  holds. 
  The mean neural network $\mathbf{f}_{\theta_\mathbf{f}}(\mathbf{x})$ 
 is a contractive map for any $\mathbf{x}$ in the domain of $\mathbf{f}_{\theta_\mathbf{f}}(\mathbf{x})$.  The variance network $\mathbf{g}_{\theta_\mathbf{g}}(\mathbf{x})$ is bounded for any $\mathbf{x}$ in the domain of $\mathbf{g}_{\theta_\mathbf{g}}(\mathbf{x})$. Or more formally:
    \begin{subequations}
        \label{eq:sufficient_1}
        \begin{align}
    &    \|\mathbf{A}_{\mathbf{f}}(\mathbf{x}) \|_p < 1 
     \label{eq:sufficient_1:1}  \\
    &  ||\mathbf{A}_{\mathbf{g}}(\mathbf{x})||_p+ \frac{||\mathbf{b}_{\mathbf{g}}(\mathbf{x})||_p}{\| \mathbf{x} \|_p} < 1, \label{eq:sufficient_1:2} \\ 
    &  \forall \mathbf{x} \in \text{Domain}(\mathbf{f}_{\theta_\mathbf{f}}(\mathbf{x}), \mathbf{g}_{\theta_\mathbf{g}}(\mathbf{x})).
        \end{align}
    \end{subequations} 
\end{theorem}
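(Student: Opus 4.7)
The plan is to split the mean-square stability claim into two sub-claims mirroring Definition~\ref{def:MSS}: convergence of the first moment and boundedness of the second moment, and to reduce each to a deterministic contraction argument via Lemma~\ref{lem:lpv}.

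First, I would apply Lemma~\ref{lem:lpv} to rewrite the mean network as $\mathbf{f}_{\theta_{\mathbf{f}}}(\mathbf{x}) = \mathbf{A}_{\mathbf{f}}(\mathbf{x})\mathbf{x} + \mathbf{b}_{\mathbf{f}}(\mathbf{x})$, so that the deterministic mean recursion~\eqref{eq:mean_dyn} becomes $\bar{\mathbf{x}}_{t+1} = \mathbf{A}_{\mathbf{f}}(\bar{\mathbf{x}}_t)\bar{\mathbf{x}}_t + \mathbf{b}_{\mathbf{f}}(\bar{\mathbf{x}}_t)$. Condition~\eqref{eq:sufficient_1:1} gives a uniform-in-$\mathbf{x}$ bound $\|\mathbf{A}_{\mathbf{f}}(\mathbf{x})\|_p < 1$, so the PWA map acts as a contraction: along the segment joining any two points a generalised mean-value argument yields $\|\mathbf{f}_{\theta_{\mathbf{f}}}(\mathbf{x}) - \mathbf{f}_{\theta_{\mathbf{f}}}(\mathbf{y})\|_p \le \sup_{\mathbf{z}}\|\mathbf{A}_{\mathbf{f}}(\mathbf{z})\|_p\,\|\mathbf{x}-\mathbf{y}\|_p < \|\mathbf{x}-\mathbf{y}\|_p$. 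Banach's fixed point theorem then provides a unique $\boldsymbol\mu$ with $\mathbf{f}_{\theta_{\mathbf{f}}}(\boldsymbol\mu) = \boldsymbol\mu$ and geometric convergence $\bar{\mathbf{x}}_t \to \boldsymbol\mu$, establishing $\lim_{t\to\infty}\mathbb{E}(\mathbf{x}_t) = \boldsymbol\mu$.

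Second, I would invoke Lemma~\ref{lem:lpv} on the variance network, writing $\mathbf{g}_{\theta_{\mathbf{g}}}(\mathbf{x}) = \mathbf{A}_{\mathbf{g}}(\mathbf{x})\mathbf{x} + \mathbf{b}_{\mathbf{g}}(\mathbf{x})$. The triangle inequality and sub-multiplicativity of $\|\cdot\|_p$ give
\begin{equation*}
\|\mathbf{g}_{\theta_{\mathbf{g}}}(\mathbf{x})\|_p \le \|\mathbf{A}_{\mathbf{g}}(\mathbf{x})\|_p\|\mathbf{x}\|_p + \|\mathbf{b}_{\mathbf{g}}(\mathbf{x})\|_p = \Bigl(\|\mathbf{A}_{\mathbf{g}}(\mathbf{x})\|_p + \tfrac{\|\mathbf{b}_{\mathbf{g}}(\mathbf{x})\|_p}{\|\mathbf{x}\|_p}\Bigr)\|\mathbf{x}\|_p,
\end{equation*}
so condition~\eqref{eq:sufficient_1:2} forces $\|\mathbf{g}_{\theta_{\mathbf{g}}}(\mathbf{x})\|_p < \|\mathbf{x}\|_p$. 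Since $\bar{\mathbf{x}}_t$ is convergent (hence bounded) by the first step, $\|\mathbf{g}_{\theta_{\mathbf{g}}}(\bar{\mathbf{x}}_t)\|_p$ is uniformly bounded in $t$, so $\mbox{vec}(L_{\beta})$ stays bounded along the trajectory. Combined with mean convergence this bounds $\mathbb{E}(\mathbf{x}_t\mathbf{x}_t^\top) = L_\beta(\bar{\mathbf{x}}_t,\Delta t) + \bar{\mathbf{x}}_t\bar{\mathbf{x}}_t^\top$, matching the relaxed MSS formulation~\eqref{eq:g_bounded}.

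The most delicate step will be legitimising the Banach contraction argument for the PWA map, whose effective Lipschitz constant $\|\mathbf{A}_{\mathbf{f}}(\mathbf{x})\|_p$ is itself state-dependent through the activation patterns $\boldsymbol\Lambda^{\boldsymbol \psi}_{\mathbf{z}_l}$ of Lemma~\ref{lem:lpv}. I would resolve this by a piecewise analysis: within each activation region $\mathbf{f}_{\theta_{\mathbf{f}}}$ is genuinely affine with Lipschitz constant equal to the local $\|\mathbf{A}_{\mathbf{f}}\|_p$, and gluing across region boundaries via a mean-value argument produces a uniform global contraction factor from~\eqref{eq:sufficient_1:1}. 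A secondary subtlety is that strict convergence of the second moment as written in Definition~\ref{def:MSS} would require $\mathbf{g}_{\theta_{\mathbf{g}}}$ itself to admit a contractive fixed point; condition~\eqref{eq:sufficient_1:2} really only delivers the norm-bounded relaxation~\eqref{eq:g_bounded}, which is the version of MSS actually being certified here and is consistent with the relaxation explicitly discussed after~\eqref{eq:g_bounded}.
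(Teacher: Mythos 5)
Your proposal follows essentially the same route as the paper's proof: invoke Lemma~\ref{lem:lpv} to put both networks in PWA form, get mean convergence from \eqref{eq:sufficient_1:1} via the Banach fixed-point theorem, and read \eqref{eq:sufficient_1:2} as a Lipschitz/norm bound on $\mathbf{g}_{\theta_\mathbf{g}}$ controlling the second moment. The two places where you go beyond the paper are actually improvements rather than divergences: the paper simply asserts that $\|\mathbf{A}_{\mathbf{f}}(\mathbf{x})\|_2<1$ makes the affine map contractive without addressing that $\mathbf{A}_{\mathbf{f}}(\mathbf{x})$ and $\mathbf{b}_{\mathbf{f}}(\mathbf{x})$ are state-dependent (your piecewise/mean-value gluing is the missing justification), and the paper claims \eqref{eq:sufficient_1:2} yields contraction of the variance toward a fixed $\mathbf{\Sigma}$, whereas your more honest conclusion --- boundedness of $\mathbf{g}_{\theta_\mathbf{g}}$ along the convergent mean trajectory, i.e.\ the relaxed condition \eqref{eq:g_bounded} rather than the full second-moment convergence of Definition~\ref{def:MSS} --- is what the argument actually delivers.
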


% \add{The proof depends on the affine map is a contraction. I think need to add a reference, or place it as a lemma. I am still concerned about "pointwise affine map, which we didn't formally define.}
\begin{proof}
First we prove the sufficiency of the contraction condition of the mean dynamics~\eqref{eq:sufficient_1:1}.
We base the proof on the equivalence of multi-layer neural networks with pointwise affine maps~\eqref{eq:lpv_dnn}.
An affine map is a contraction 
if the $2$-norm of its linear part is bounded below one, i.e. $ ||\mathbf{A}||_2 < 1$.
Thus it follows that the condition~\eqref{eq:sufficient_1:1} and equivalence~\eqref{eq:lpv_dnn} imply a contractive mean neural network $\mathbf{f}_{\theta_\mathbf{f}}(\mathbf{x})$.
The sufficiency of the contraction condition on mean square stable (MSS) equilibrium in the sense of Definition~\ref{def:MSS} follows directly from the Banach fixed-point theorem, which states that every contractive map converges towards single point equilibirum. Hence condition~\eqref{eq:sufficient_1:1} implies convergent mean transition dynamics:
\begin{equation}
   \mathbf{\mu} = \mathbf{f}_{\theta_\mathbf{f}}(\mathbf{\mu}) = 
   \lim_{t \to \infty} \mathbf{f}_{\theta_\mathbf{f}}(\bar{\mathbf{x}}_t)
\end{equation}

% \add{TODO: think about the variance argument, options: i, lipschitz bound, ii, contractive map -- we can keep this}

Now we show the sufficiency of~\eqref{eq:sufficient_1:2}
to guarantee the boundedness of the covariance matrix elements~\eqref{eq:g_bounded}  by
bounding the $p$-norm of the covariance neural network $ \| \mathbf{g}_{\theta_\mathbf{g}}(\mathbf{x})\|_p$.
Please note that using the form~\eqref{eq:lpv_dnn} gives us $ \| \mathbf{g}_{\theta_\mathbf{g}}(\mathbf{x})\|_p  =
\|\mathbf{A}_{\mathbf{g}}(\mathbf{x})\mathbf{x} + \mathbf{b}_{\mathbf{g}}(\mathbf{x})\|_p$ yielding following inequalities:
\begin{subequations}
\begin{align}
  \label{eq:variance_norm_inequality}
&  \| \mathbf{g}_{\theta_\mathbf{g}}(\mathbf{x})\|_p \le 
\|\mathbf{A}_{\mathbf{g}}(\mathbf{x})\mathbf{x}\|_p  + \|\mathbf{b}_{\mathbf{g}}(\mathbf{x})\|_p, 
\\
&  \frac{||\mathbf{g}_{\theta_\mathbf{g}}(\mathbf{x})||_p}{\| \mathbf{x} \|_p} \le  \|\mathbf{A}_{\mathbf{g}}(\mathbf{x}) \|_p + \frac{||\mathbf{b}_{\mathbf{g}}(\mathbf{x})||_p}{\| \mathbf{x} \|_p}.  \label{eq:variance_norm_inequality:2}
\end{align}
\end{subequations}
We show that~\eqref{eq:variance_norm_inequality:2} gives in fact a local Lipschitz constant of the variance network $\mathbf{g}_{\theta_\mathbf{g}}(\mathbf{x})$.
We exploit the point-wise affine nature of the neural network's form~\eqref{eq:lpv_dnn} and the fact that the norm of a linear operator $\mathbf{A}$ is equivalent to its minimal Lipschitz constant $\mathcal{K}^{A}_{min} = || \mathbf{A} ||_p $~\citep{huster2018limitations}. 
Thus we can compute the local Lipschitz constants of a neural network $\mathbf{g}_{\theta_\mathbf{g}}(\mathbf{x})$ as:
\begin{align}
  \label{eq:dnn_global_Lipschitz_norm_bias}
    \mathcal{K}^{\mathbf{g}}(\mathbf{x}) =   ||\mathbf{A}_{\mathbf{g}}(\mathbf{x})||_p+ \frac{||\mathbf{b}_{\mathbf{g}}(\mathbf{x})||_p}{\| \mathbf{x} \|_p}.
\end{align}
Applying
the upper bound~\eqref{eq:sufficient_1:2} 
on the local Lipschitz constant~\eqref{eq:dnn_global_Lipschitz_norm_bias} 
 guarantees the contraction of the variance neural network $\mathbf{f}_{\theta_\mathbf{g}}(\mathbf{x})$ towards a fixed steady state $\mathbf{\Sigma}$.
\end{proof}

\begin{remark}
  To guarantee stochastic stability, the condition~\eqref{eq:sufficient_1:2} can be relaxed as given in~\eqref{eq:g_bounded} to bounded second moment~\eqref{eq:g_bounded} with $ \max_\mathbf{x} \mathcal{K}^{\mathbf{g}}(\mathbf{x}) < K$, where $K>0$. 
\end{remark}

\begin{corollary}
\label{thm:DMM_stability_2}
The deep Markov model \eqref{eq:DMM} which is parametrized by deep neural networks  \eqref{eq:dnn_dyn}-\eqref{eq:dnn_dyn2} remains globally stable in the mean-square sense if the following  holds: 
  All weights $\mathbf{A}_i^{\mathbf{f}}$  of the mean network are $\mathbf{f}_{\theta_\mathbf{f}}$  contractive maps. All activation scaling matrices $\boldsymbol\Lambda^{\mathbf{f}}_{\mathbf{z}_i}$  of the mean
  network are non-expanding.
  Norms of all weights $\mathbf{A}_j^{\mathbf{g}}$
  and activation scaling matrices 
  $\boldsymbol\Lambda^{\mathbf{g}}_{\mathbf{z}_j}$
  of the variance network $\mathbf{g}_{\theta_\mathbf{g}}$ 
  are upper bounded by $1$. Or more formally:
        \begin{subequations}
        \label{eq:sufficient_2}
        \begin{align}
       & \|\mathbf{A}_i^{\mathbf{f}}\|_p < 1,  \ ||\boldsymbol\Lambda^{\mathbf{f}}_{\mathbf{z}_i}||_p \le 1 \ i \in \mathbb{N}_1^{L_{\mathbf{f}}}, \\ 
       & \|\mathbf{A}_j^{\mathbf{g}}\|_p < 1,  \ ||\boldsymbol\Lambda^{\mathbf{g}}_{\mathbf{z}_j}||_p \le 1, \ j \in \mathbb{N}_1^{L_{\mathbf{g}}}, \label{eq:sufficient_2:2} \\ 
      &  \forall \mathbf{x} \in \text{Domain}(\mathbf{f}_{\theta_\mathbf{f}}(\mathbf{x}), \mathbf{g}_{\theta_\mathbf{g}}(\mathbf{x})).
        \end{align}
    \end{subequations}
\end{corollary}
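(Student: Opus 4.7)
The plan is to reduce Corollary~\ref{thm:DMM_stability_2} to Theorem~\ref{thm:DMM_stability_1} by showing that the stated per-layer spectral bounds imply the per-network contraction/boundedness conditions in~\eqref{eq:sufficient_1}. The enabling tool is the explicit factorization of the PWA form provided by Lemma~\ref{lem:lpv}, which writes $\mathbf{A}_{\boldsymbol\psi}(\mathbf{x})$ as an alternating product of weight matrices and diagonal activation-pattern matrices. The whole proof is essentially a clean application of sub-multiplicativity of the induced $p$-norm to this product, plus a bias-bounding argument obtained from recursion~\eqref{eq:dnn_bias_recurence}.

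First I would handle the mean network. Applying Lemma~\ref{lem:lpv} and sub-multiplicativity to~\eqref{eq:dnn_LPV} gives
$$\|\mathbf{A}_{\mathbf{f}}(\mathbf{x})\|_p \;\le\; \|\mathbf{A}^{\mathbf{f}}_{L_\mathbf{f}}\|_p \prod_{i=0}^{L_{\mathbf{f}}-1} \|\boldsymbol\Lambda^{\mathbf{f}}_{\mathbf{z}_i}\|_p \, \|\mathbf{A}^{\mathbf{f}}_i\|_p,$$
and the hypothesis $\|\mathbf{A}^{\mathbf{f}}_i\|_p < 1$ together with $\|\boldsymbol\Lambda^{\mathbf{f}}_{\mathbf{z}_i}\|_p \le 1$ forces the whole product to be strictly below $1$, uniformly in $\mathbf{x}$. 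This is exactly~\eqref{eq:sufficient_1:1}. The identical argument applied to $\mathbf{A}_{\mathbf{g}}(\mathbf{x})$ yields $\|\mathbf{A}_{\mathbf{g}}(\mathbf{x})\|_p<1$ uniformly in $\mathbf{x}$, supplying the contractive part of~\eqref{eq:sufficient_1:2}.

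Next I would bound the bias of the variance network. Using the recursion~\eqref{eq:dnn_bias_recurence}, $\mathbf{b}_{\mathbf{g}}(\mathbf{x})$ is a finite sum of products involving $\mathbf{A}_j^{\mathbf{g}}$, $\boldsymbol\Lambda^{\mathbf{g}}_{\mathbf{z}_j}$, the fixed layer biases $\mathbf{b}_l$, and the activation offsets $\boldsymbol\sigma_l(\mathbf{0})$. Since each matrix factor has $p$-norm bounded by one under hypothesis~\eqref{eq:sufficient_2:2}, and the bias/offset vectors are fixed, the triangle and sub-multiplicative inequalities yield a uniform bound $\|\mathbf{b}_{\mathbf{g}}(\mathbf{x})\|_p \le B$ for a finite constant $B$ that depends only on the trained parameters. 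Combining with the previous step gives $\|\mathbf{g}_{\theta_\mathbf{g}}(\mathbf{x})\|_p \le \|\mathbf{x}\|_p + B$, so the covariance output is globally norm-bounded in the sense of~\eqref{eq:g_bounded}.

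The one delicate point — and the step I expect to be the main obstacle — is reconciling this bias term with the strict ratio condition~\eqref{eq:sufficient_1:2} in Theorem~\ref{thm:DMM_stability_1}, since $\|\mathbf{b}_{\mathbf{g}}(\mathbf{x})\|_p/\|\mathbf{x}\|_p$ can blow up as $\|\mathbf{x}\|_p \to 0$ and therefore does \emph{not} follow pointwise from the corollary's hypotheses. I would therefore invoke the relaxed version of~\eqref{eq:sufficient_1:2} recorded in the Remark, which only requires $\max_\mathbf{x}\mathcal{K}^{\mathbf{g}}(\mathbf{x}) < K$ in order to conclude MSS; the uniform bound $B$ derived above provides precisely such a $K$. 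A cleaner alternative, worth noting in the proof, is to further restrict to activations with $\sigma(0)=0$ and zero layer biases inside $\mathbf{g}$, which kills $\mathbf{b}_{\mathbf{g}}(\mathbf{x})$ entirely and makes~\eqref{eq:sufficient_1:2} reduce to the already-established $\|\mathbf{A}_{\mathbf{g}}(\mathbf{x})\|_p<1$. In either case, Theorem~\ref{thm:DMM_stability_1} (or its remark) applies and the DMM is globally MSS.
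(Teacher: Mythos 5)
Your proposal follows essentially the same route as the paper's proof: both reduce the corollary to the per-network conditions of Theorem~\ref{thm:DMM_stability_1} by applying sub-multiplicativity of the induced $p$-norm (the paper's~\eqref{eq:Gelfand_norm}) to the alternating product $\mathbf{A}^{\boldsymbol\psi}_L \boldsymbol\Lambda_{\mathbf{z}_{L-1}}\cdots\boldsymbol\Lambda_{\mathbf{z}_0}\mathbf{A}^{\boldsymbol\psi}_0$ in the PWA factorization of Lemma~\ref{lem:lpv}, which immediately gives $\|\mathbf{A}_{\mathbf{f}}(\mathbf{x})\|_p<1$ and $\|\mathbf{A}_{\mathbf{g}}(\mathbf{x})\|_p<1$ uniformly in $\mathbf{x}$. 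Where you go beyond the paper is in your treatment of the variance network's bias: the paper's proof simply asserts that sub-multiplicativity ``naturally applies also to the variance network,'' which controls only the linear part and silently skips the term $\|\mathbf{b}_{\mathbf{g}}(\mathbf{x})\|_p/\|\mathbf{x}\|_p$ appearing in~\eqref{eq:sufficient_1:2}; as you correctly observe, that ratio is not controlled by the per-layer hypotheses~\eqref{eq:sufficient_2:2} and can blow up as $\|\mathbf{x}\|_p\to 0$, so the corollary's conditions do not pointwise imply~\eqref{eq:sufficient_1:2}. Your patch --- bounding $\mathbf{b}_{\mathbf{g}}(\mathbf{x})$ uniformly via the recursion~\eqref{eq:dnn_bias_recurence} and falling back on the relaxed boundedness condition~\eqref{eq:g_bounded} (or restricting to zero biases and activations with $\sigma(0)=0$ so that $\mathbf{b}_{\mathbf{g}}\equiv\mathbf{0}$) --- is a genuine improvement in rigor over the published argument, and is exactly the repair the paper's proof needs to be airtight.
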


% c^{\boldsymbol\Lambda}
% c^{\mathbf{A}}

\begin{proof}
First we show the sufficiency $\|\mathbf{A}_i\|_p < 1$
of contractive weights and non-expanding activation scaling matrices $\|\boldsymbol\Lambda_{\mathbf{z}_i}\| \le 1$
to guarantee the contractivity of arbitrary deep neural networks.
Assuming general non-square weights $\mathbf{A}_i \in \mathbf{R}^{n_i \times m_i}$ we use the submultiplicativity of the induced $p$-norms to upper bound the norm of a products of $m$ matrices given as:
\begin{equation}
\label{eq:Gelfand_norm}
 \|\mathbf{A}_1  \ldots \mathbf{A}_m  \|_p \le
     \| \mathbf{A}_1  \|_p  \ldots \| \mathbf{A}_m  \|_p
\end{equation}
Now by applying~\eqref{eq:Gelfand_norm}  
 to the linear parts~\eqref{eq:dnn_LPV} of the mean neural network $\mathbf{f}_{\theta_\mathbf{f}}$  in the pointwise affine form~\eqref{eq:lpv_dnn} with 
 $ \|\mathbf{A}_i^{\mathbf{f}} \|_p < 1, \ \forall i \in \mathbb{N}_0^L$, $\|\boldsymbol\Lambda^{\mathbf{f}}_{\mathbf{z}_j}\|_p   \le 1,  \ \forall j \in \mathbb{N}_1^L$, it yields $\|\mathbf{A}_{\mathbf{f}}(\mathbf{x})\|_p < 1$ over the entire domain of $\mathbf{f}_{\theta_\mathbf{f}}(\mathbf{x})$, thus with $p=2$ satisfying the contraction condition  $||\mathbf{A}_\mathbf{f}(\mathbf{x})||_2 < 1$ for affine maps.
 The submultiplicativity~\eqref{eq:Gelfand_norm}  naturally applies also to the variance network  $\mathbf{g}_{\theta_\mathbf{g}}(\mathbf{x})$ thus implying the contraction towards a fixed point given the conditions~\eqref{eq:sufficient_2}.
%  To satisfy the stronger convergence condition in the second part of mean square stability definition~\ref{def:MSS} it is straightforward to see that following must hold $K = \prod_i^L c^{\mathbf{A}} c^{\boldsymbol\Lambda} < 1$
%  in order to have a contractive variance network $\mathbf{g}_{\theta_\mathbf{g}}(\mathbf{x})$.
\end{proof}

\begin{remark}
   To guarantee stochastic stability, we can relax the upper bound of the second moment as $K = \prod_i^L c^{\mathbf{A}} c^{\boldsymbol\Lambda}$, where $c^{\mathbf{A}} >0$, and $c^{\boldsymbol\Lambda} >0$ represent the relaxed upper bounds of the operator norms in condition~\eqref{eq:sufficient_2:2}.  
 Thus satisfying the  relaxed boundedness condition on the variance via~\eqref{eq:g_bounded}.
\end{remark}

Assuming the contraction conditions~\eqref{eq:sufficient_1}
or~\eqref{eq:sufficient_2} hold and the mean neural network $\mathbf{f}_{\theta_\mathbf{f}}$ has zero bias, then the DMM's mean network $\mathbf{f}_{\theta_\mathbf{f}}(\mathbf{x})$ is equivalent with stable parameter varying linear map~\eqref{eq:dnn_LPV}
with equilibrium in the origin, i.e.  $\bar{\mathbf{x}} = \mathbf{0}$. 
In the case with non-zero bias in $\mathbf{f}_{\theta_\mathbf{f}}$,
the corresponding PWA map~\eqref{eq:lpv_dnn} has non-zero equilibrium $\bar{\mathbf{x}} \neq \mathbf{0}$.
Both conditions~\eqref{eq:sufficient_1}
or~\eqref{eq:sufficient_2} are sufficient for a convergence of a DMM~\eqref{eq:DMM} to a stable equilibrium $\bar{\mathbf{x}}$. However, they do not provide bounds of the admissible values of the equilibrium $\bar{\mathbf{x}}$.
The corresponding equilibrium bounds are provided in the supplementary material.

\subsection{Design of Stable Deep Markov Models}
\label{sec:design}
In this section, we provide a set of practical design methods for provably stable DMM~\eqref{eq:DMM}.
Based on the Corollary~\ref{thm:DMM_stability_2} the use of contractive activation functions together with contractive weights for both mean and variance network will guarantee the stability of DMM by design.
In particular, the conditions~\eqref{eq:sufficient_2}  on bounded norm of transitions' activation scaling matrices $||\boldsymbol\Lambda^{\mathbf{f}}_{\mathbf{z}_i}|| \le 1$,
$||\boldsymbol\Lambda^{\mathbf{g}}_{\mathbf{z}_j}|| \le 1$ implies Lipschitz continuous activation functions with constant $\mathcal{K} \le 1$.  Conveniently, this condition is satisfied for many popular activation functions such as \texttt{ReLU}, \texttt{LeakyReLU}, or \texttt{tanh}.
The contractivity conditions~\eqref{eq:sufficient_2}
on weight matrices $\|\mathbf{A}_i^{\mathbf{f}}\|_p <1$,
 $\|\mathbf{A}_j^{\mathbf{g}}\|_p <1$, respectively, 
 can be enforced by employing various matrix factorizations 
 proposed in the deep neural network litertature. 
 Examples include
  singular value decomposition (SVD)~\citep{zhang2018stabilizing},  
   Perron-Frobenius (PF)~\citep{tuor2020constrained}, and
   Gershgorin discs (GD)~\citep{lechner2020gershgorin}  factorizations given below.
   
\paragraph{PF weights:}
 This factorization applies Perron-Frobenius theorem for  constraining the dominant eigenvalue of the square nonnegative matrices. 
Based on this theorem, we can construct the weight matrix $\mathbf{{A}}$ with bounded eigenvalues as follows:
\begin{subequations}
\label{eq:pf}
\begin{align}
\mathbf{M} &= \lambda_{\text{max}} - (\lambda_{\text{max}} - \lambda_{\text{min}})  g(\mathbf{M'}) \\
\mathbf{{A}}_{i,j} &= \frac{\text{exp}(\mathbf{A'}_{ij})}{\sum_{k=1}^{n_x} \text{exp}(\mathbf{A'}_{ik})}\mathbf{M}_{i,j}
\end{align}
\end{subequations}
here $\mathbf{M}$ represents the damping factor parameterized by the matrix $\mathbf{M'} \in \mathbb{R}^{n_x \times n_x}$, while $\mathbf{A'} \in \mathbb{R}^{n_x \times n_x}$
represents the second parameter matrix encoding the stable weights $\mathbf{{A}}$. The lower and upper bound of the dominant eigenvalue are given by $\lambda_{\text{min}}$ and $\lambda_{\text{max}}$, respectively.

\paragraph{SVD weights:}
Inspired by singular value decomposition (SVD), this method decomposes a possibly non-square weight matrix $\mathbf{{A}} = \mathbf{U\boldsymbol{\Sigma}V}$ into two unitary matrices $\mathbf{U}$ and $\mathbf{V}$, and a diagonal matrix $\boldsymbol{\Sigma}$ with singular values on its diagonal.
The orthogonality of $\mathbf{U}$ and $\mathbf{V}$ is enforced via penalties: 
\begin{equation}
\mathcal{L}_{\text{reg}}  = || \mathbf{I} - \mathbf{UU}^\top ||_2 + || \mathbf{I} - \mathbf{U}^\top\mathbf{U} ||_2 
  +|| \mathbf{I} - \mathbf{VV}^\top ||_2 + || \mathbf{I} - \mathbf{V}^\top\mathbf{V} ||_2 
  \label{eq:SVD_reg}
\end{equation}
An alternative approach to penalties introduced in~\cite{zhang2018stabilizing} is to use Householder reflectors  to represent unitary matrices $\mathbf{U}$ and $\mathbf{V}$.
The constraints $\lambda_{\text{min}}$ and $\lambda_{\text{max}}$ on the singular values $\lambda$    can be  implemented  by clamping and scaling given as:
\begin{equation}
\mathbf{\boldsymbol{\Sigma}} = \text{diag}(\lambda_{\text{max}} - (\lambda_{\text{max}} - \lambda_{\text{min}}) \cdot \sigma(\lambda))
\end{equation}

\paragraph{GD weights:}
This method supporting square matrices leverages the {Gershgorin dics theorem}~\citep{Varga_Gersgorin2004}.
It says that
all eigenvalues $\lambda_i$ of the weight $ \mathbf{{A}}$  can be bounded in the complex plane with center $\lambda$ and radius $r$ given by the formula:
\begin{equation}
\label{eq:Gershgorin}
 \mathbf{{A}} = \texttt{diag}\begin{pmatrix}\frac{r}{s_1}, ..., \frac{r}{s_n}\end{pmatrix}\mathbf{M} +  \texttt{diag}\begin{pmatrix}\lambda, ..., \lambda\end{pmatrix}
\end{equation}
Where  
$\mathbf{M} \in  \mathbb{R}^{n\times n}$ with $m_{i,j} \sim \mathcal{U}(0,1)$, except $m_{i, i} = 0$ is learnable parameter matrix. While diagonal matrices $\texttt{diag}\begin{pmatrix}\frac{r}{s_1}, ..., \frac{r}{s_n}\end{pmatrix}$, and $\texttt{diag}\begin{pmatrix}\lambda, ..., \lambda\end{pmatrix}$ represent radii and centers of the bounded eigenvalues, where $s_j = \sum_{i\neq j} m_{i,j}$. 

% \add{I think we are missing a paragraph here. It's not clear to me how these contraints can be used in
% DMM construction. Example: in SVD, how do we use $V$ and $U$ to design DMM?}

\paragraph{Parametric stability constraints:} 
The disadvantage of enforcing the global stability conditions as given via Corollary~\ref{thm:DMM_stability_2}
is their negative effect on the expressivity of the DMM, resulting in dynamics with a single point or line attractors. This will effectively prevent the DMM from expressing  more complex attractors such limit cycles or chaotic attractors. 
As a more expressive alternative we introduce the use of parameter varying bounds in the conditions~\eqref{eq:sufficient_1}, and~\eqref{eq:sufficient_2}, such as:
\begin{equation}
   \underline{\mathbf{p}}(\mathbf{x})   < \|\mathbf{A}_{\mathbf{f}}(\mathbf{x}) \|_p <  \overline{\mathbf{p}}(\mathbf{x}) 
     \label{eq:parametric_bounds} 
\end{equation}
Where $ \underline{\mathbf{p}}(\mathbf{x}): \mathbb{R}^{n_{\mathbf{x}}} \to \mathbb{R}$, and $\overline{\mathbf{p}}(\mathbf{x}): \mathbb{R}^{n_{\mathbf{x}}} \to \mathbb{R}$ are scalar valued functions parametrizing lower and upper bounds of operator norm of the DMM's mean transition dynamics. 
Similar parametric constraints can be applied to the variance bounds in~\eqref{eq:sufficient_1}, or weight norm constraints in~\eqref{eq:sufficient_2}.
This approach allows us to control the contractivity of the DMMs depending on the position in the state space. This allows us to
 increase the expressivity of the DMM, e.g., by partitioning the state space into constrained and unconstrained regions resulting in DMM with hybrid or switching dynamics.
In particular, we could divide the state space to outer contractive regions (where conditions~\eqref{eq:sufficient_1} hold) and inner relaxed regions allowing for more complex trajectories to emerge. 
This parametrization will effectively generate non-empty attractor set in which it is possible to learn arbitrary attractor shape.
The proposed state space partitioning method is inspired by the Bendinxon-Dulac criteria on periodic solutions of differential equations~\citep{mccluskey_bendixson_dulac_1998}.

% \add{hybrid DMMs - comment on construction, simulation, maybe differentiability?}

\section{Numerical Case Studies}
\label{sec:case_study}

 In this section we empirically validate the conditions given in Theorem~\ref{thm:DMM_stability_1} and Corollary~\ref{thm:DMM_stability_2}
 by investigating the dynamics of DMM's
  transition maps~\eqref{eq:DMM} whose mean $\mathbf{f}_{\theta_\mathbf{f}}(\mathbf{x})$ and variance $\mathbf{g}_{\theta_\mathbf{g}}(\mathbf{x})$ are parametrized by neural networks with different spectral distributions of their weights and activation scaling matrices~\eqref{eq:lambda_matrix}.
We apply spectral analysis to the PWA forms~\eqref{eq:lpv_dnn} of neural networks modeling the mean and variance maps to obtain the corresponding spectra of DMMs. We performed the experiments using the probabilistic programming language Pyro \citep{bingham2019pyro}.

\subsection{Design of the Experiments}
\label{sec:experiments}

Since the stability of DMM~\eqref{eq:DMM} 
depends on the transition dynamcis,  
in all of the case studies we consider a fully observable 
model with identity as an emission map.
We parametrize the mean and variances of the transition map $\mathbf{f}_{\theta_\mathbf{f}}(\mathbf{x})$~\eqref{eq:dnn_dyn} and $\mathbf{g}_{\theta_\mathbf{g}}(\mathbf{x})$~\eqref{eq:dnn_dyn2} by feedforward neural networks.
Given the mean neural network $\mathbf{f}_{\theta_\mathbf{f}}(\mathbf{x})$ we generate a set of different transition dynamcis by changing activation functions $\mathbf{v}(x) \in \{\texttt{ReLU}, \texttt{Tanh}, \texttt{Sigmoid}, \texttt{SELU}, \texttt{Softplus}\}$, layer depth $L \in \{ 1, 2 ,4, 8\} $, and presence of bias $b \in \{\texttt{True}, \texttt{False} \} $.
For the variance network $\mathbf{g}_{\theta_\mathbf{g}}(\mathbf{x})$
we use $\texttt{ReLU}$ activations.
For both, mean and variance networks we initialize their weights with desired spectral properties via design methods described in Section~\ref{sec:design}.
In particular we use SVD,  
  PF, and GD factorizations to bound the weight's singular values in a prescribed range. We generate three categories of weights $\mathbf{A}_i$: stable with operator norm strictly below one $||\mathbf{A}_i||_p < 1$, marginally stable with norm close one $||\mathbf{A}_i||_p \approx 1$, and unstable with norm larger than one $||\mathbf{A}_i||_p  > 1$.

\subsection{Stability Analysis of Deep Markov Models}
\label{sec:case_stability}
\begin{figure*}[htb!]
  \begin{center}
 {\begin{tabular}{rccc}
\rotatebox[origin=c]{90}{Stable mean}  \rotatebox[origin=c]{90}{$||\mathbf{A}_\mathbf{f}||_p  < 1$} & \includegraphics[width=.20\linewidth,valign=m,trim=60 25 100 00, clip]{./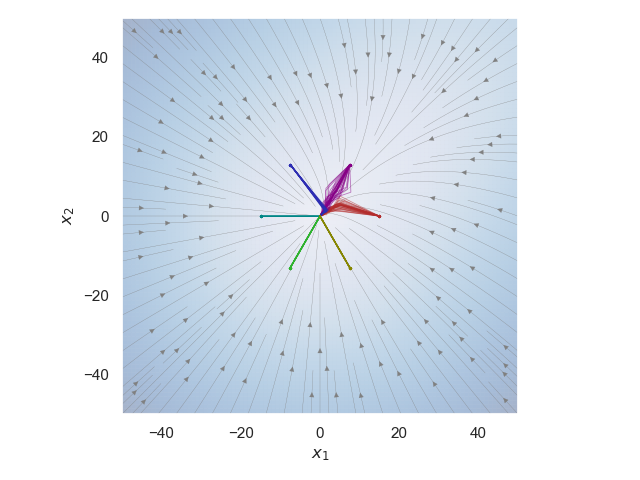}
& \includegraphics[width=.20\linewidth,valign=m,trim=60 25 100 00, clip]{./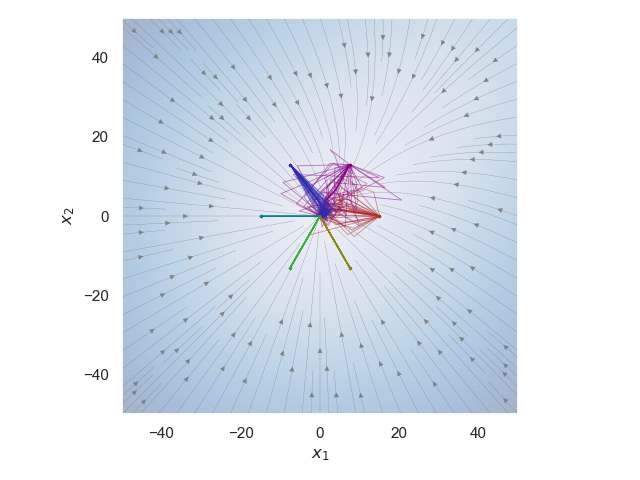} 
& \includegraphics[width=.20\linewidth,valign=m,trim=60 25 100 00, clip]{./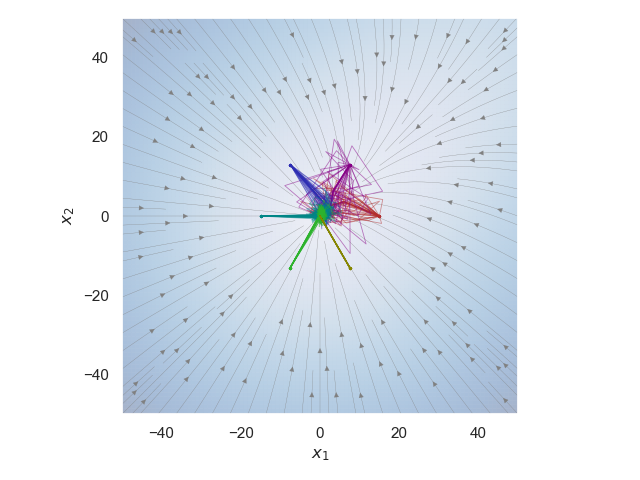}\\
\rotatebox[origin=c]{90}{Marginal  mean} \rotatebox[origin=c]{90}{$||\mathbf{A}_\mathbf{f}||_p  \approx 1$} & \includegraphics[width=.20\linewidth,valign=m,trim=60 25 100 00, clip]{./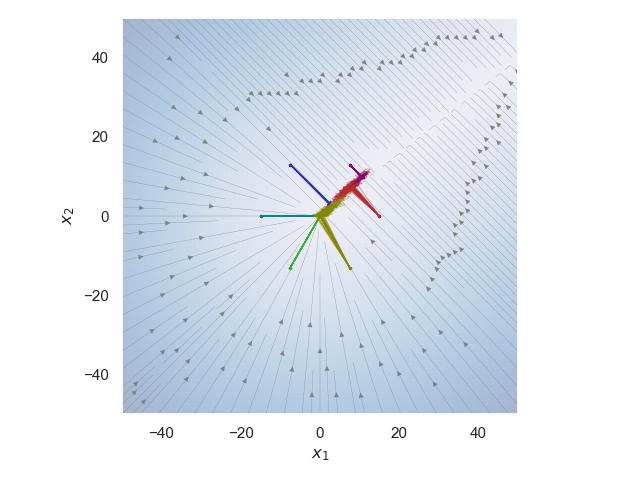} 
& \includegraphics[width=.20\linewidth,valign=m,trim=60 25 100 00, clip]{./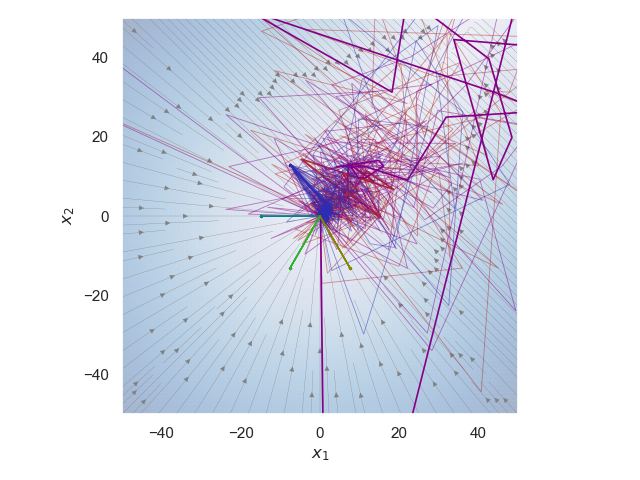} 
& \includegraphics[width=.20\linewidth,valign=m,trim=60 25 100 00, clip]{./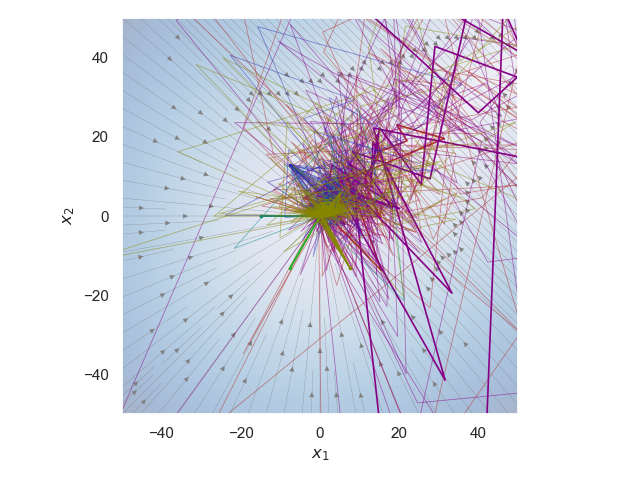}\\
\rotatebox[origin=c]{90}{Unstable mean} \rotatebox[origin=c]{90}{$||\mathbf{A}_\mathbf{f}||_p  > 1$} & \includegraphics[width=.20\linewidth,valign=m,trim=60 25 100 00, clip]{./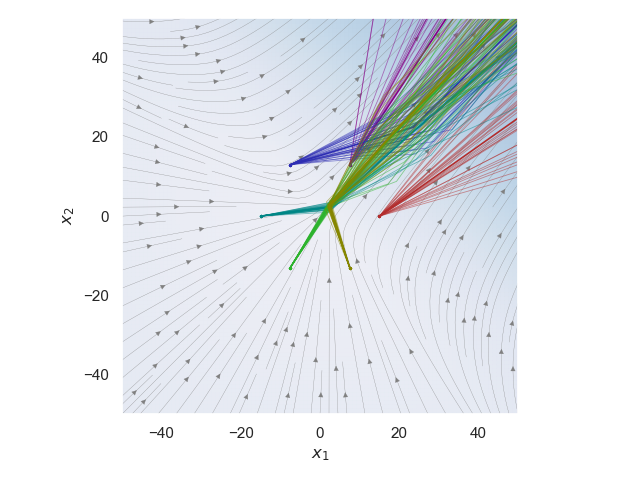} 
& \includegraphics[width=.20\linewidth,valign=m,trim=60 25 100 00, clip]{./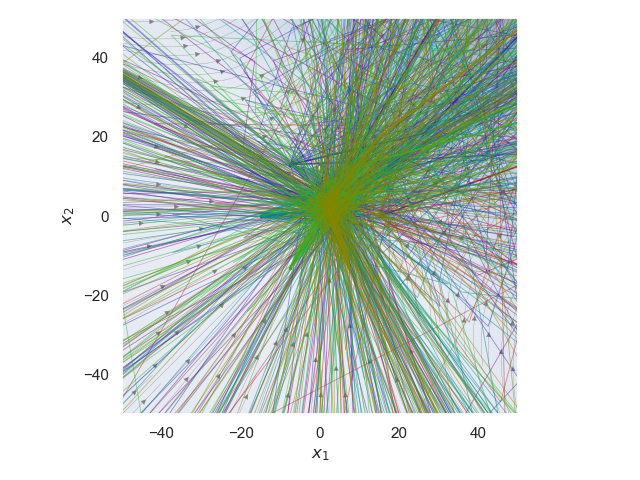}
& \includegraphics[width=.20\linewidth,valign=m,trim=60 25 100 00, clip]{./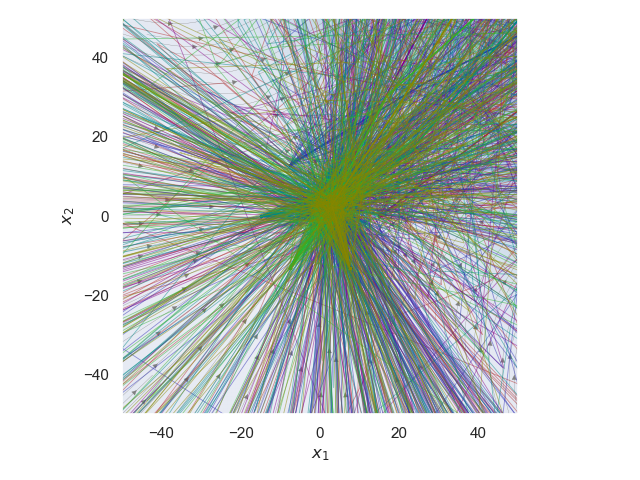}\\
&  \hspace{0.3cm} Stable  var. $\|\mathbf{A}_\mathbf{g}\|_p < 1$ 
&  \hspace{0.3cm} Marginal var. $\| \mathbf{A}_\mathbf{g}\|_p \approx 1$ 
&   \hspace{0.3cm} Unstable  var. $\|\mathbf{A}_\mathbf{g}\|_p > 1$ \\
\end{tabular}}
\caption{Phase portraits of DMMs demonstrating the effect of  norm bounds on mean $\mathbf{f}_{\theta_\mathbf{f}}(\mathbf{x})$ and variance $\mathbf{f}_{\theta_\mathbf{g}}(\mathbf{x})$ networks modeling transition dynamics. Thin lines are samples of the stochastic dynamics with bold lines representing mean trajectories. Colors represent different initial conditions.}
\label{fig:stability_phase_space}
  \end{center}
 \end{figure*}

% {\color{blue} Jiaxin - still not very understand figure 1. Could you show some highlevel insight that we can get from this figure? that may help to let junior reviewer to have a better sense. }

In order to provide intuitive visualisations of the dynamics in the phase space, in this section, we focus on two dimensional system. 
 Fig.~\ref{fig:stability_phase_space}
 visualizes the phase portraits of randomly generated DMM's probabilistic transition maps of the mean $\mathbf{f}_{\theta_\mathbf{f}}(\mathbf{x})$ and variance $\mathbf{f}_{\theta_\mathbf{g}}(\mathbf{x})$ networks with  constrained operator norms enforced using PF weights from Section~\ref{sec:design}.
 Figures in the first row demonstrate that DMMs with asymptotically stable mean transition dynamics  $||\mathbf{A}_\mathbf{f}||_p < 1$ with bounded variances
 $\|\mathbf{A}_\mathbf{g}\|_p < K, \ K>0$ 
 generate
 stable single point attractors. Hence they 
 validate the sufficient conditions of Theorem~\ref{thm:DMM_stability_1}. 
%  This empirically holds even for asymptotically unstable but bounded variance  $1 < \|\mathbf{J}_\mathbf{g} \|_p < K, \ K>0$ displayed in the third column of  Fig.~\ref{fig:stability_phase_space}. 
%  
 Figures in the second row display dynamics of DMM with marginally stable mean  $||\mathbf{A}_\mathbf{f}||_p \approx 1$.
 Due to the non-dissipativeness of the mean transition dynamics, the trajectories converge to a line attractor only if the variance is a converging map $\|\mathbf{A}_\mathbf{g} \|_p < 1$, thus having a dissipative second moment. In case of marginally stable variance, $\|\mathbf{A}_\mathbf{g} \|_p \approx 1$ the energy conserving nature of the mean and variance together generate random walk type trajectories along the direction of the mean's line attractor.
 For the cases with unstable variance $\|\mathbf{A}_\mathbf{g} \|_p > 1$, the overall dynamics behaves close to a Brownian motion with a degree of randomness, which is positively correlated with  the variance network's operator norm.
Figures in the third row show diverging dynamics of DMM with
unstable mean  $||\mathbf{A}_\mathbf{f}||_p > 1$.
With converging variance $\|\mathbf{A}_\mathbf{g} \|_p < 1$ the diverging stochastic trajectories stay close to the mean direction.
While, for both marginal and unstable variances
 the stochastic trajectories diverge in all directions.
% 
% Fig.~\ref{fig:phase_space_partitions} visualizes spatial distribution of the dominant eigenvalues (spectral radius of the Jacobian) of the mean transition dynamics  $\rho(\mathbf{J}_\mathbf{f})$ from Fig.~\ref{fig:stability_phase_space}.
% In Fig.~\ref{fig:phase_space_partitions} the phase space is partitioned into regions with different dynamics,
% cold colors representing contractive regions  $\rho(\mathbf{J}_\mathbf{f}) < 1$, white representing energy conserving regions $\rho(\mathbf{J}_\mathbf{f}) \approx 1$, while warm colors indicating expanding regions  $\rho(\mathbf{J}_\mathbf{f}) > 1$.
% Evaluating Jacobian's spectral radius as a function of latent space variables can be a useful tool for robustness analysis by identifying regions with high sensitivity to perturbations. 

%  \add{propose visualizing eigenvalues as function of latent dynamics space to investigate the sensitivity on perturbations - for larger dimensions}

\subsection{Effect of  Biases and Depth on the Stability of Deep Markov Models}

 \begin{figure*}[ht!]
   \begin{center}
 {\begin{tabular}{ccc}
    \includegraphics[width=.20\linewidth,valign=m,trim=60 25 100 00, clip]{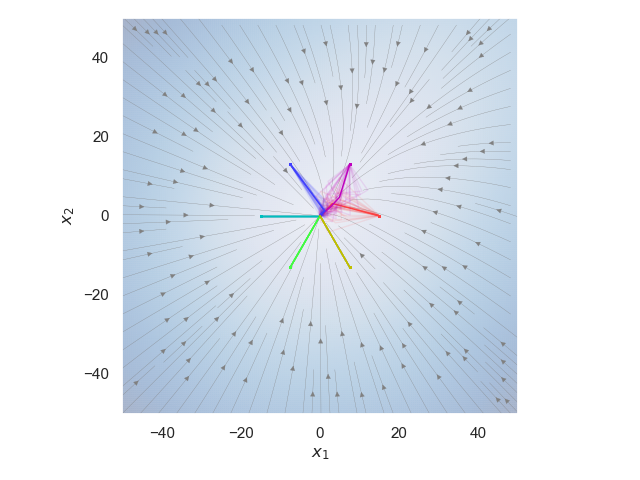}
    \rulesep
    & \includegraphics[width=.20\linewidth,valign=m,trim=60 25 90 00, clip]{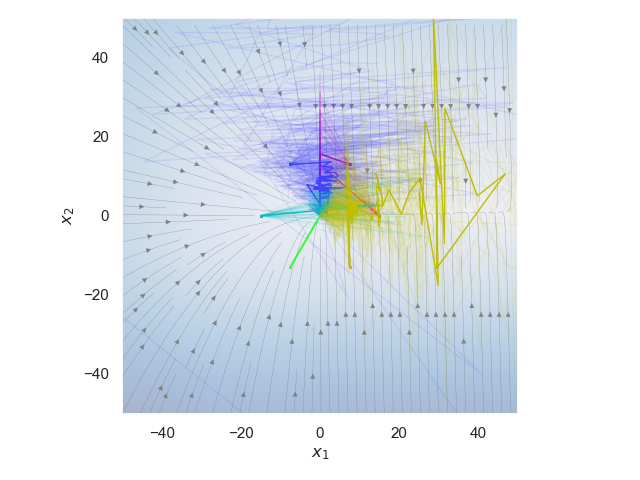}
    & \includegraphics[width=.20\linewidth,valign=m,trim=60 25 90 00, clip]{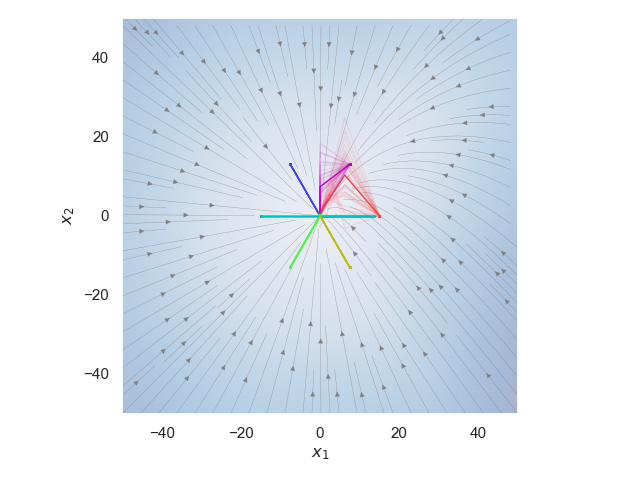}\\
\hspace{0.5cm} (a)  & 
\hspace{0.5cm} (b) &   \hspace{0.5cm} (c) \\
    \includegraphics[width=.20\linewidth,valign=m,trim=60 25 100 00, clip]{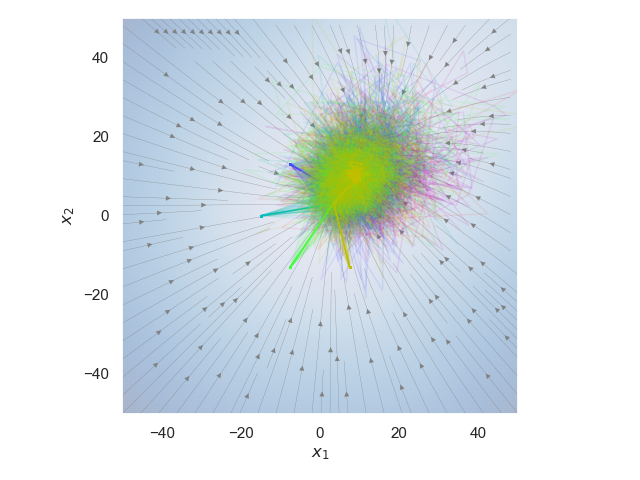}
    \rulesep
    &\includegraphics[width=.20\linewidth,valign=m,trim=60 25 100 00, clip]{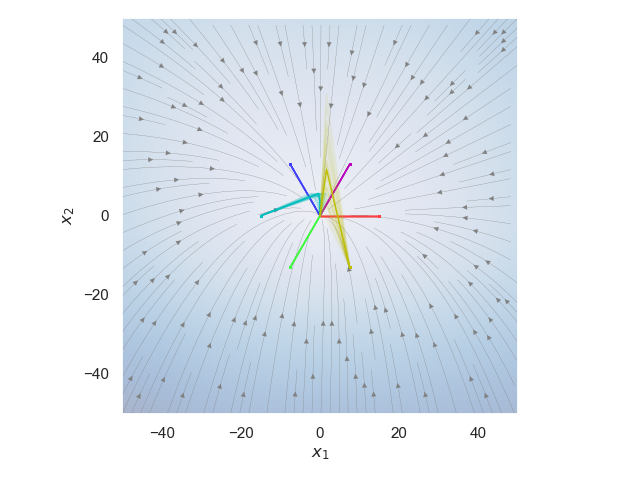}
    & \includegraphics[width=.20\linewidth,valign=m,trim=60 25 100 00, clip]{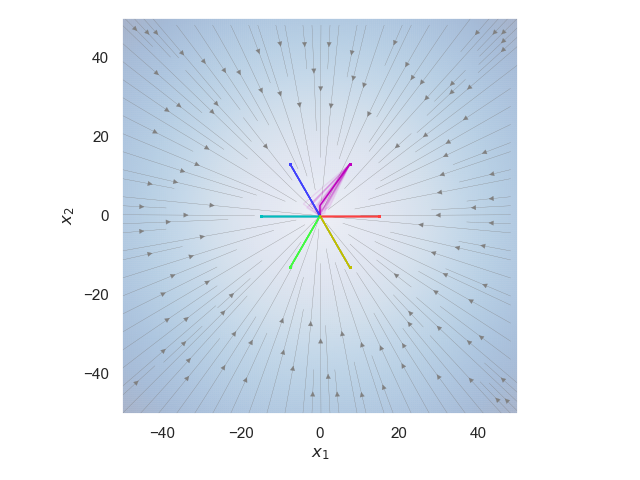}\\
\hspace{0.5cm} (d) & 
\hspace{0.5cm} (e) &   \hspace{0.5cm} (f) \\
\end{tabular}}
\caption{Left panels show the effect of biases using PF  regularization and \texttt{ReLU} activation ((a) w/o bias, (d)  w bias). Right panels show the effect of network  $\mathbf{f}$ depths with SVD regularization and \texttt{ReLU} : (b) $1$ layer, (c) $2$ layers, (e) $4$ layers, (f) $8$ layers. }
\label{fig: bias+depth}
  \end{center}
 \end{figure*}
 
 In Fig.~\ref{fig: bias+depth}, we experiment with  biases and depths of mean $\mathbf{f}_{\theta_\mathbf{f}}(\mathbf{x})$ and variance $\mathbf{g}_{\theta_\mathbf{g}}(\mathbf{x})$
 networks.
\paragraph{Effect of biases:}
 In the left panels of Fig.~\ref{fig: bias+depth}, we demonstrate the dynamics of DMM with \texttt{tanh} activations and SVD factorized weights resulting in asymptotically stable transition maps, thus $||\mathbf{A}_\mathbf{f}(\mathbf{x})|| < 1$, and $||\mathbf{A}_\mathbf{g}(\mathbf{x})|| < 1$. Fig.~\ref{fig: bias+depth}(a) shows the scenario without any bias whereas Fig.~\ref{fig: bias+depth}(d) shows the scenario where both $\mathbf{f}_{\theta_\mathbf{f}}(\mathbf{x}), \mathbf{g}_{\theta_\mathbf{g}}(\mathbf{x})$ have bias terms. It demonstrates that the general contractive nature of the stable behavior, as given via conditions~\eqref{eq:sufficient_1}, does not change with addition of biases. Instead, the biases shift the region of attraction by generating non-zero equilibrium points.  This shift is correlated with absolute value of the aggregate bias term of the PWA form~\eqref{eq:lpv_dnn}. 
%  Moreover, it demonstrates that the trajectories remain bounded within a small neighborhood near the equilibrium given a small absolute values of biases. 
For the norm bounds on the equilibria of stable DMM
see supplementary material.

% \add{TODO: generate case with more obvious shift with nonzero bias}

 \paragraph{Effect of depth:}
The right panels  of Fig.~\ref{fig: bias+depth} demonstrate the dynamics of DMMs with increasing number of layers using  \texttt{ReLU} activations and SVD weights close to marginal stability $0.99 < ||\mathbf{A}_i(\mathbf{x})|| < 1$. It can be seen that with increase in the number of layers, the convergence of trajectories toward origin becomes less uncertain. In this case,  
 a larger number of mildly contractive layers 
 results in stabilizing behavior, demonstrating the effect of the norm submultiplicativity  on the operator norm of the mean transition dynamics~\eqref{eq:Gelfand_norm}.
 However, one needs to be careful about the delicate balance between stability, and ability to efficiently train the parameters of DMM with gradient-based optimization. 
 With increasing depth, the norm product of the contractive layers~\eqref{eq:Gelfand_norm}
 will eventually result in a network with a very small operator norm thus causing the vanishing gradient problem.
Analogously, the exploding gradient problem  will occur for DMM parametrized with very deep neural networks 
with non-contractive layers, i.e. $ ||\mathbf{A}_i(\mathbf{x})|| > 1$
The use of parametric stability constraints~\eqref{eq:parametric_bounds} as a function of depth could be an efficient strategy for avoiding the vanishing and exploding gradients by keeping the overall dynamics norm bounded.

% For the analysis of the activation functions influence
% on the dynamics of DMM see the Appendix~\ref{sec:activations}.

\subsection{Deep Markov Models with Parametrized Stability Constraints}
\label{sec:case_hybrid}

In Fig.~\ref{fig:param_con} we demonstrate the use of parametrized stability constraints~\eqref{eq:parametric_bounds} in the design of stable DMMs without compromising the expressivity as it is in the case of restrictive single point attractors enforced via~\eqref{eq:sufficient_1} and~\eqref{eq:sufficient_2}.
In particular, we design two DMMs with randomly generated weights with three phase space regions with different mean transition dynamics, (i) an inner expanding region $||\mathbf{A}_\mathbf{f}(\mathbf{x})|| > 1, \ \mathbf{x}^i \in \mathcal{R}_1$, (ii) a middle marginal region $||\mathbf{A}_\mathbf{f}(\mathbf{x}^i)|| \approx 1, \ \mathbf{x}^i \in \mathcal{R}_2$, and (iii) an outer contractive region $||\mathbf{A}_\mathbf{f}(\mathbf{x}^i)|| < 1, \ \mathbf{x}^i \in \mathcal{R}_3$. 
Where the regions are given as $ \mathcal{R}_1 = \{\mathbf{x}| 0 \le ||\mathbf{x}||_2 <  20 \}$, $ \mathcal{R}_2 = \{\mathbf{x}| 20 \le ||\mathbf{x}||_2 <  40 \}$, and $ \mathcal{R}_3 = \{\mathbf{x}| 40 \le ||\mathbf{x}||_2 \}$, respectively. 
The variance dynamics in both cases is kept being contractive $||\mathbf{A}_\mathbf{g}(\mathbf{x}^i)|| < 1$.
From  Fig.~\ref{fig:param_con} it is apparent that the overall dynamics of the DMMs with parametrized  constraints~\eqref{eq:parametric_bounds} is able to generate stochastic periodic behavior
while remaining bounded within prescribed region of attraction, thus providing high degree of expressivity while being provable stable.
On the left (Fig.~\ref{fig:param_con} (a) and (c)) we show phase plots, and on the right (Fig.~\ref{fig:param_con} (b) and (d))  corresponding time series trajectories. 
As a potential extension, we envision learning
the constraints bounds $\underline{\mathbf{p}}(\mathbf{x})$, and $\overline{\mathbf{p}}(\mathbf{x}) $ in~\eqref{eq:parametric_bounds} using penalty methods.
 \begin{figure*}[ht!]
  \begin{center}
 {\begin{tabular}{cc}
    \includegraphics[width=.34\linewidth,valign=m,trim=60 25 80 0, clip]{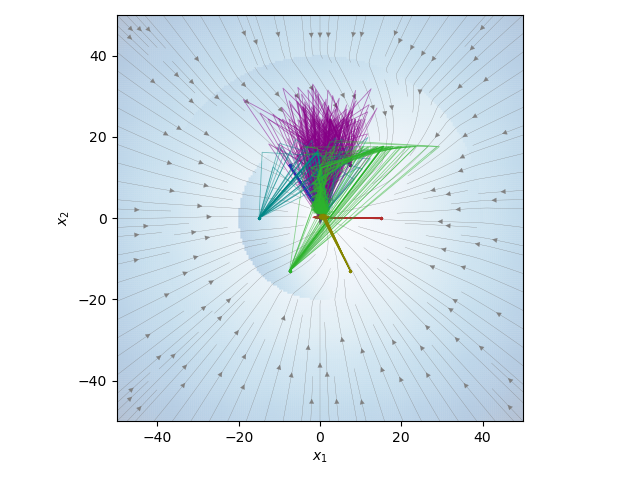}
    & \includegraphics[width=.42\linewidth,valign=m,trim=0 0 0 0, clip]{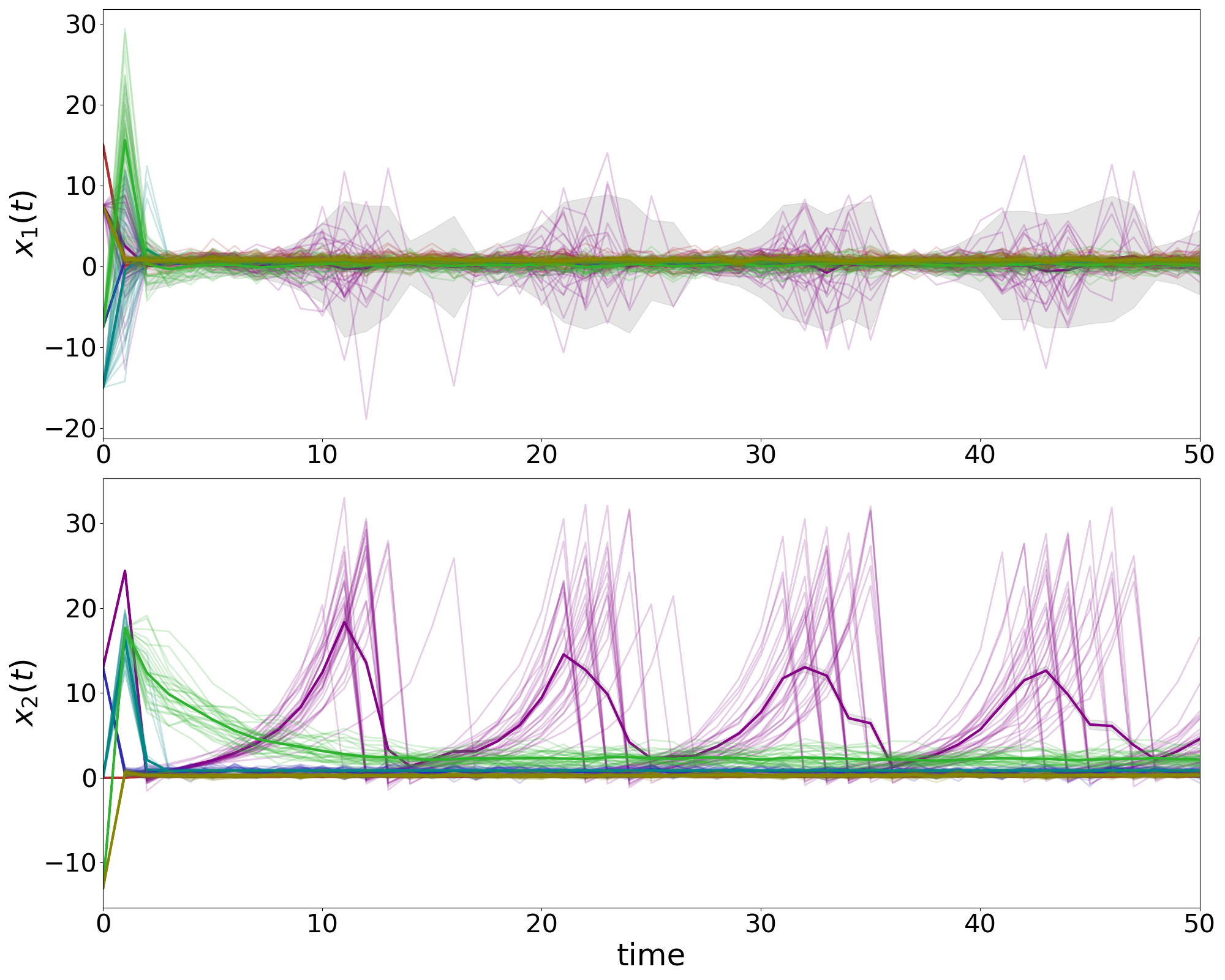}\\
\hspace{0.3cm} (a)  & 
\hspace{0.3cm} (b)  \\
    \includegraphics[width=.34\linewidth,valign=m,trim=60 25 80 0, clip]{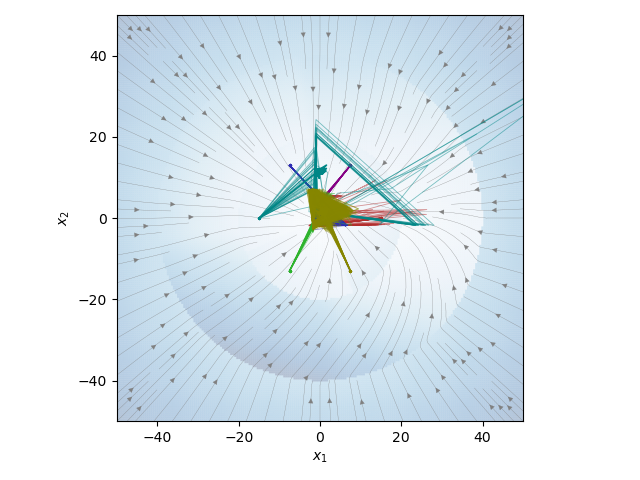}
    & \includegraphics[width=.42\linewidth,valign=m,trim=0 0 0 0, clip]{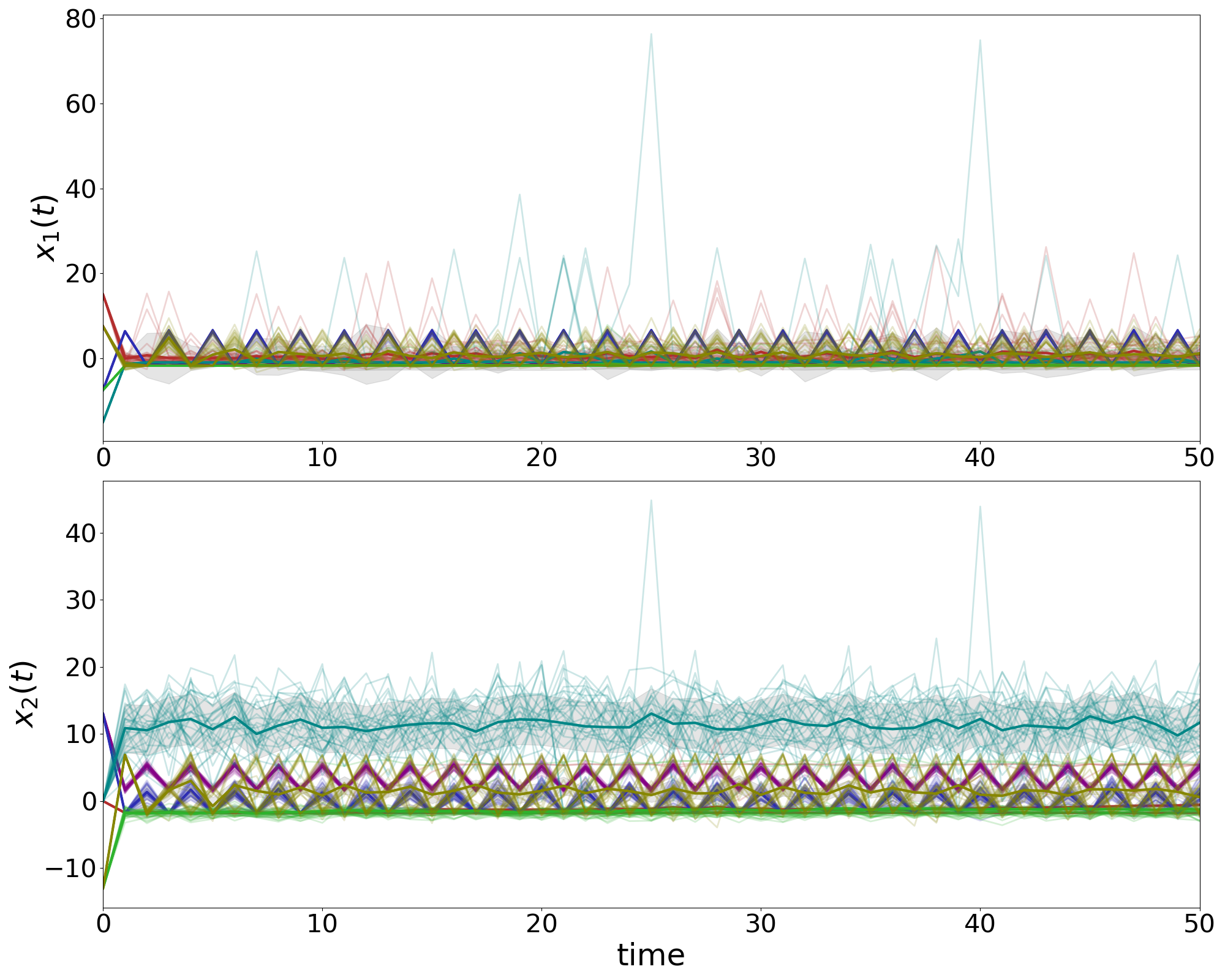}\\
\hspace{0.3cm} (c) & 
\hspace{0.3cm} (d) 
\end{tabular}}
\caption{Phase plots (left panels), and time series trajectories (right panels) of DMMs with parametrized stability constraints exhibiting periodic behavior within bounded regions of attaction. Both cases consider SVD regularizations with \texttt{softplus} (top) and \texttt{SELU}(bottom) respectively.}
\label{fig:param_con}
  \end{center}
 \end{figure*}
 
% \add{TODO: select new figure 2 with different initial conditions showing bounded trajectories -- added some figures in Teams chat, please take any}
% {\color{blue} TODO: Jiaxin - figure 3 can be revised in single row with four column to save space?}

\section{Conclusion}
In this paper, we introduce a new stability analysis method for deep  Markov models (DMMs).
As the main result, we provide sufficient conditions for the stochastic stability and introduce a set of practical methods for designing provably stable DMMs. In particular, we discuss the use of contractive weight matrices factorizations and stability conditions for the activation functions. Furthermore,  we propose using novel parametric stability constraints allowing expression of more complex stochastic dynamics while remaining contractive towards non-empty region of attraction. 
The proposed theory is supported by numerical experiments, with design guidelines considering weight factorizations, choices of activation functions, network depth, or use of the bias terms for guaranteed stability.
In future work, we aim to derive the stability guarantees for a broader family of probability distributions modeled by normalizing flows.
We also aim to expand the theory
to partially observable Markov decision processed (POMDP) to derive closed-loop stability guarantees in the context of deep reinforcement learning. 
% Future empirical studies will focus on inference and control tasks by learning DMMs with imposed stability conditions from time-series data of unknown dynamical systems.

% \add{main novelty, this design has been used for DNNs but havent been investigated for DNNs, add more equations on PF, Greshoring, SVD }

% \add{TODO
% \textbf{energy conserving system}, tutorial 2D case with concentric circles as attractors  \\
% demonstrate that DMMs have difficulty for energy conservaation law due to inherent stochasticity.  \\
% bottom line DMM may not be best choice for some systems unless we use the proposed parametrized stability constraints \\
%  }

%  insights: different activations generate different phase fields \\
% %  https://arxiv.org/abs/2002.08071
%  weights play a key role in stability \\
% laeyrs dump stable, or amplify unstable dynamics \\ 

\begin{ack}

We acknowledge our colleagues Aaron Tuor, Mia Skomski, Soumya Vasisht, and Draguna Vrabie for their contributions to related topics that served as a base for developing the method presented in this paper.
We would like to thank Craig Baker and David Rolnick for fruitful discussions that helped improve the technical quality of the presented ideas.
Also, we want to thank our anonymous reviewers for their constructive feedback and suggestions.

This research was supported by the U.S. Department of Energy, through the Office of Advanced Scientific Computing Research's “Data-Driven Decision Control for Complex Systems (DnC2S)” project. Pacific Northwest National Laboratory is operated by Battelle Memorial Institute for the U.S. Department of Energy under Contract No. DE-AC05-76RL01830. Oak Ridge National Laboratory is operated by UT-Battelle LLC for the U.S. Department of Energy under contract number DE-AC05-00OR22725.

\end{ack}

% \section*{References}

\section{Limitations and Broader Impact}

% The presented work is based on the pointwise linearization of multi-layer feedforward neural networks with pointwise affine maps (PWA) as given in~\eqref{eq:lpv_dnn}.
% This linearization holds true for arbitrary activation functions $v(z)$ over the entire domain except points where ${z} = 0$. 
% This problem has been addressed  by assuming the approximation 
% $\lim_{z \to 0}\frac{ \boldsymbol v(z)}{z} = 0$.
% It is straightforward to verify, that this approximation holds true for all activations functions with trivial null space, i.e. $v(0) = 0$. Fortunately, most of the commonly used activation functions (e.g., ReLU, GELU, tanh) satisfy this condition. Thus their use does not introduce any inaccuracies in the presented stability analysis.
% For activation functions without trivial null space 
% $v(0) \neq 0$ (e.g., sigmoid) the value of the PWA~\eqref{eq:lpv_dnn} form of deep neural network will be burdened with the approximation error. 

Stability is the major concern of many safety-critical systems. The significance of the presented work lies in the proposed stability conditions, analysis, and design methods allowing the generation of provably stable DMMs. 
Furthermore, the presented methods for the design of stable DMMs could be of large significance in the context of stochastic control systems with stability guarantees.
Thus, the authors believe that besides academic relevance, the methods presented in this paper have the potential
for practical impact in many real-world applications such as
unmanned autonomous vehicles, robotics, or process control applications.

The authors are aware of the limitations of the presented numerical case studies focusing on small-scale DMMs with two-dimensional state space. This choice was made for the sake of the  visualizations of the state space trajectories 
allowing us to provide intuitive examples of the presented theoretical results. 
As part of the future work, the authors plan to use the proposed stability analysis and design methods for DMMs on real-world datasets with higher dimensional state space.

The presented work falls into the basic research category. As such, authors are not aware of any potential direct negative societal impact of the proposed work.
On the contrary, authors of this paper believe that the presented theory is a minor contribution
towards general knowledge, which accumulation has been historically proved to inherently benefit all humanity.

\appendix

\section{Supplementary Material}

\subsection{Background}
\label{sec:prelim}
In this section, we recall few important mathematical definitions and theorems used in the paper. 
\begin{definition}
Induced operator norm of a matrix $\mathbf{A} \in \mathbb{R}^{n\times m}$ is defined as:
 \begin{equation}
 ||\mathbf{A}||_p =  \max_{\mathbf{x} \neq 0} \frac{ ||\mathbf{A}\mathbf{x}||_p}{||\mathbf{x}||_p}  =   \max_{\|\mathbf{x}\|_p = 1} ||\mathbf{A}\mathbf{x}||_p, \ \ \forall \mathbf{x} \in \mathcal{X},
   \label{eq:operator_norm}
 \end{equation}
%  \begin{equation}
%  ||\mathbf{A} \mathbf{x}|| \leq c ||\mathbf{x}||, \ \ \forall \mathbf{x} \in \mathcal{X}
%   \label{eq:operator_norm}
%  \end{equation}
 where $\mathcal{X}$ is a compact normed vector space, and  $|| \cdot ||_p: \mathbb{R}^{n} \to \mathbb{R}$ represents vector $p$-norm inducing the matrix norm $ || \mathbf{A} ||_p: \mathbb{R}^{n \times m} \to \mathbb{R}$. 
 In case of Euclidean norm $|| \cdot ||_2$, the operator norm
corresponds to the largest singular value $ \sigma_{\mbox{max}}(\mathbf{A})$:
 \begin{equation}
  \label{eq:operator_norm_2}
|| \mathbf{A} ||_2 = \max_{||\mathbf{x}||_2 = 1}  ||\mathbf{A} \mathbf{x}||_2 = \sigma_{\mbox{max}}(\mathbf{A}).
\end{equation}

 The matrix norm is sub-additive:
  \begin{equation}
  \label{eq:operator_norm_subadd}
|| \mathbf{A} + \mathbf{B} ||_p \le || \mathbf{A}||_p + ||\mathbf{B} ||_p.
\end{equation} 

Induced $p$-norm $||\cdot||_p: \mathbb{R}^{n \times m} \to \mathbb{R}$ is called submultiplicative if it satisfies~\citep{matrix_norms1983}:
 \begin{equation}
  \label{eq:operator_norm_submultiplicative}
|| \mathbf{A} \mathbf{B} ||_p \le || \mathbf{A}||_p  ||\mathbf{B} ||_p.
\end{equation} 
\end{definition}

\begin{remark}
  For commonly used norms $||\cdot||_p$ where $p \in \{1, 2, \infty \}$ the submultiplicativity~ \eqref{eq:operator_norm_submultiplicative} holds.
\end{remark}

\begin{definition}
Asymptotic stability of the dynamical system 
for a bounded initial condition $\mathbf{x}_{0}$ implies that its states converge to the equilibrium point $\bar{\mathbf{x}}_e$:
  \label{def:asymptotic}
  \begin{equation}
  \label{eq:asymptotic}
     || \mathbf{x}_0 - \bar{\mathbf{x}}_e || < \delta  \implies \lim_{t \rightarrow \infty} ||\bar{{\mathbf{x}}}_t|| =  \overline{\mathbf{x}}_e
 \end{equation}
\end{definition}

\begin{definition}
Given a metric space $(\mathcal{X},d)$, a mapping $T: \mathcal{X}\to \mathcal{X}$ is called contractive if there exist a constant $c \in [0, 1)$ and a metric $d$ such that following holds:
 \begin{equation}
  \label{eq:contraction}
      d(T( \mathbf{x_1}),T( \mathbf{x_2})) \le  c d( \mathbf{x_1}, \mathbf{x_2}), \  \forall  \mathbf{x_1},  \mathbf{x_2} \in   \mathcal{X}
\end{equation} 
Contraction generalizes to
maps $T: \mathcal{X}\to \mathcal{Y}$ between two metric spaces  $(\mathcal{X},d)$, and  $(\mathcal{Y},d')$ as:
 \begin{equation}
  \label{eq:contraction_2}
      d'(T( \mathbf{x_1}),T( \mathbf{x_2})) \le  c d( \mathbf{x_1}, \mathbf{x_2}), \  \forall  \mathbf{x_1},  \mathbf{x_2} \in   \mathcal{X}
\end{equation} 
\end{definition}
% https://en.wikipedia.org/wiki/Banach_fixed-point_theorem

\begin{definition}
\label{def:affine_contract}
An affine map $T(\mathbf{x}) =  \mathbf{A} \mathbf{x} + \mathbf{b}$ with $\mathbf{x} \in \mathbb{R}^n$ and  metric $d = ||\cdot||_p$ is a contraction mapping 
if the spectral norm of its linear component $\mathbf{A}$  is less than one, i.e. $ ||\mathbf{A}||_2 < 1$.
\end{definition}
% https://sun4.vaniercollege.qc.ca/~iti/proj/David.pdf

\begin{theorem}
\label{thm:banch}
Banach fixed-point theorem. Lets have non-empty complete metric space $(\mathcal{X},d)$ then every contraction~\eqref{eq:contraction} is converging towards a unique fixed point  $T( \mathbf{x}_{\text{ss}}) = \mathbf{x}_{\text{ss}}$.
\end{theorem}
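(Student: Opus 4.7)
The plan is to follow the classical Picard iteration argument, which is the standard route to Banach's theorem. I would start by picking an arbitrary initial point $\mathbf{x}_0 \in \mathcal{X}$ and defining the iterated sequence $\mathbf{x}_{n+1} = T(\mathbf{x}_n)$ for $n \ge 0$. Applying the contraction inequality~\eqref{eq:contraction} successively gives $d(\mathbf{x}_{n+1}, \mathbf{x}_n) \le c\, d(\mathbf{x}_n, \mathbf{x}_{n-1})$, and by an easy induction $d(\mathbf{x}_{n+1}, \mathbf{x}_n) \le c^n d(\mathbf{x}_1, \mathbf{x}_0)$. This geometric decay of consecutive distances is the engine driving the whole proof.

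Next I would establish that $\{\mathbf{x}_n\}$ is Cauchy. For any $m > n$, the triangle inequality together with the above bound gives
\begin{equation*}
d(\mathbf{x}_m, \mathbf{x}_n) \le \sum_{k=n}^{m-1} d(\mathbf{x}_{k+1}, \mathbf{x}_k) \le \Big(\sum_{k=n}^{m-1} c^k\Big) d(\mathbf{x}_1, \mathbf{x}_0) \le \frac{c^n}{1-c}\, d(\mathbf{x}_1, \mathbf{x}_0).
\end{equation*}
Because $c \in [0,1)$, the right-hand side tends to $0$ as $n \to \infty$, so the sequence is Cauchy. Completeness of $(\mathcal{X}, d)$ then yields a limit $\mathbf{x}_{\text{ss}} \in \mathcal{X}$ with $\mathbf{x}_n \to \mathbf{x}_{\text{ss}}$.

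To verify $\mathbf{x}_{\text{ss}}$ is a fixed point I would invoke that any contraction is automatically Lipschitz (hence continuous) with constant $c$, so passing to the limit in $\mathbf{x}_{n+1} = T(\mathbf{x}_n)$ yields $\mathbf{x}_{\text{ss}} = T(\mathbf{x}_{\text{ss}})$. For uniqueness, suppose $\mathbf{y} \in \mathcal{X}$ also satisfies $T(\mathbf{y}) = \mathbf{y}$; then
\begin{equation*}
d(\mathbf{x}_{\text{ss}}, \mathbf{y}) = d(T(\mathbf{x}_{\text{ss}}), T(\mathbf{y})) \le c\, d(\mathbf{x}_{\text{ss}}, \mathbf{y}),
\end{equation*}
so $(1-c)\, d(\mathbf{x}_{\text{ss}}, \mathbf{y}) \le 0$, and since $c < 1$ this forces $d(\mathbf{x}_{\text{ss}}, \mathbf{y}) = 0$, i.e., $\mathbf{y} = \mathbf{x}_{\text{ss}}$.

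There is no serious obstacle here, since this is a textbook result; the only mildly delicate step is the telescoping-plus-geometric-series estimate that produces the Cauchy bound, and that calculation is precisely where the hypothesis $c < 1$ (as opposed to merely non-expansiveness) is essential. Everything else (continuity of $T$, passing to the limit, uniqueness) is routine once that bound is in hand.
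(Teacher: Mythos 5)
Your proof is correct and complete: it is the classical Picard iteration argument, with the geometric decay of consecutive distances, the telescoping/geometric-series bound establishing the Cauchy property, completeness supplying the limit, continuity of the contraction yielding the fixed-point identity, and the contraction inequality forcing uniqueness. The paper itself states this result only as background in the supplementary material and gives no proof, so there is nothing to compare against; your argument is the standard one and every step, including the only delicate point you flag (where $c<1$ rather than mere non-expansiveness is needed), is handled correctly.
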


\subsection{Stable Deep Markov Models with Bounded Equilibria}
\label{sec:corollary6}

\begin{corollary}
\label{thm:DMM_stability_3}
An equilibrium $\bar{\mathbf{x}}_e$ of a mean-square stable
deep Markov model is bounded by $\underline{\mathbf{x}} \le || \bar{\mathbf{x}}_e ||_p \le \overline{\mathbf{x}}$ if either of the conditions in Theorem 3
or Corollalry 5 are satisfied,
and the following holds $\forall \mathbf{x} \in \text{dom}(\mathbf{f}_{\theta_\mathbf{f}}(\mathbf{x}))$.
\begin{equation}
\label{eq:stable_eq_dmm}
  \overline{\mathbf{x}} \le \frac{ \|\mathbf{b}_{\mathbf{f}}({\mathbf{x}})\|_p }{1- \|\mathbf{A}_{\mathbf{f}}({\mathbf{x}}) \|_p}, \ \
 \frac{ \|\mathbf{b}_{\mathbf{f}}({\mathbf{x}})\|_p }{1+ \|\mathbf{A}_{\mathbf{f}}({\mathbf{x}}) \|_p} \le     \underline{\mathbf{x}} 
\end{equation}
\end{corollary}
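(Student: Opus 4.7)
The plan is to exploit the defining fixed-point relation of the equilibrium together with the pointwise affine (PWA) representation of the mean network from Lemma 2. At any equilibrium we have $\bar{\mathbf{x}}_e = \mathbf{f}_{\theta_\mathbf{f}}(\bar{\mathbf{x}}_e)$, and Lemma 2 immediately rewrites this as
\begin{equation}
\bar{\mathbf{x}}_e \;=\; \mathbf{A}_{\mathbf{f}}(\bar{\mathbf{x}}_e)\,\bar{\mathbf{x}}_e \;+\; \mathbf{b}_{\mathbf{f}}(\bar{\mathbf{x}}_e).
\end{equation}
From here the two bounds are routine consequences of triangle inequalities applied to this algebraic relation, which is why I only sketch the structure.

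First, for the upper bound, I would take the $p$-norm of both sides of the fixed-point equation, apply the triangle inequality, and then use submultiplicativity of the induced operator norm to obtain
\begin{equation}
\|\bar{\mathbf{x}}_e\|_p \;\le\; \|\mathbf{A}_{\mathbf{f}}(\bar{\mathbf{x}}_e)\|_p\,\|\bar{\mathbf{x}}_e\|_p \;+\; \|\mathbf{b}_{\mathbf{f}}(\bar{\mathbf{x}}_e)\|_p.
\end{equation}
Because Theorem 3 (or equivalently Corollary 5 via the submultiplicativity argument already used there) furnishes the strict contraction $\|\mathbf{A}_{\mathbf{f}}(\mathbf{x})\|_p < 1$ on the whole domain, the factor $1-\|\mathbf{A}_{\mathbf{f}}(\bar{\mathbf{x}}_e)\|_p$ is strictly positive, so I can safely divide to isolate $\|\bar{\mathbf{x}}_e\|_p$ and obtain the stated upper bound $\overline{\mathbf{x}}$. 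The second, lower bound follows by the reverse triangle inequality applied to the same fixed-point equation, giving $\|\bar{\mathbf{x}}_e\|_p \ge \|\mathbf{b}_{\mathbf{f}}(\bar{\mathbf{x}}_e)\|_p - \|\mathbf{A}_{\mathbf{f}}(\bar{\mathbf{x}}_e)\|_p\|\bar{\mathbf{x}}_e\|_p$, which rearranges into $\underline{\mathbf{x}}$ after dividing by $1+\|\mathbf{A}_{\mathbf{f}}(\bar{\mathbf{x}}_e)\|_p > 0$.

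The only subtlety, and the step I would treat most carefully, is the universal quantifier over $\mathbf{x}$ in the statement: the expressions $\mathbf{A}_{\mathbf{f}}(\mathbf{x})$ and $\mathbf{b}_{\mathbf{f}}(\mathbf{x})$ of the PWA form are input-dependent, whereas the fixed-point argument only constrains them at $\mathbf{x}=\bar{\mathbf{x}}_e$. To justify writing the bounds in terms of a generic $\mathbf{x}$ in the domain, I would note that by Theorem 3 the contraction holds uniformly on the entire domain, so a valid domain-wide envelope is obtained by taking the supremum of $\|\mathbf{b}_{\mathbf{f}}(\mathbf{x})\|_p$ in the numerator and the infimum of $1-\|\mathbf{A}_{\mathbf{f}}(\mathbf{x})\|_p$ in the denominator; the pointwise inequality then transfers to the uniform bound $\overline{\mathbf{x}}$ (analogously for $\underline{\mathbf{x}}$). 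No further regularity is needed since the equilibrium already lies in the domain and existence/uniqueness of $\bar{\mathbf{x}}_e$ is guaranteed by Banach's fixed-point theorem invoked in Theorem 3.

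I expect the main obstacle to be purely notational rather than mathematical: making the distinction between the pointwise (at $\bar{\mathbf{x}}_e$) and uniform (over the domain) versions of the bound precise, so that the quantifier in the statement is read correctly. Once that is pinned down, everything else reduces to the triangle inequality and the already-established contraction condition $\|\mathbf{A}_{\mathbf{f}}(\mathbf{x})\|_p<1$, with no new technical machinery required.
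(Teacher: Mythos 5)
Your proposal is correct and follows essentially the same route as the paper: substitute the PWA form into the fixed-point equation $\bar{\mathbf{x}}_e = \mathbf{A}_{\mathbf{f}}(\bar{\mathbf{x}}_e)\bar{\mathbf{x}}_e + \mathbf{b}_{\mathbf{f}}(\bar{\mathbf{x}}_e)$, obtain the upper bound from the triangle inequality and the strict positivity of $1-\|\mathbf{A}_{\mathbf{f}}\|_p$, and note that your reverse-triangle-inequality step for the lower bound is equivalent to the paper's estimate $\|\mathbf{I}-\mathbf{A}_{\mathbf{f}}(\bar{\mathbf{x}}_e)\|_p \le 1+\|\mathbf{A}_{\mathbf{f}}(\bar{\mathbf{x}}_e)\|_p$ applied to $(\mathbf{I}-\mathbf{A}_{\mathbf{f}}(\bar{\mathbf{x}}_e))\bar{\mathbf{x}}_e = \mathbf{b}_{\mathbf{f}}(\bar{\mathbf{x}}_e)$. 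Your explicit handling of the quantifier over $\mathbf{x}$ (via the supremum/infimum envelope) addresses a point the paper's proof leaves implicit, since it only ever evaluates the PWA quantities at $\bar{\mathbf{x}}_e$.
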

% For the proof of the Corollary~\ref{thm:DMM_stability_3} see the Appendix~\ref{sec:corollary6}. 

% \add{TODO: bounded steady states \\
% conditions bounds on norms and biases for bounded steady states. Contractive affine maps converge to the steady state but the value may be large depending on the norms of linear transforms and absolute values of the biases. We propose method for strady states in a bounded sets }

\begin{proof}
The conditions in Theorem 3
and Corollary 5 imply 
mean square stability of the DMM with a mean dynamics converging towards a stable equilibrium $\bar{\mathbf{x}}_e$:
\begin{equation}
\label{eq:DMM_equilibrium}
  \bar{\mathbf{x}}_e = \mathbf{f}_{\theta_\mathbf{f}}(\bar{\mathbf{x}}_e) = 
   \lim_{t \to \infty} \mathbf{f}_{\theta_\mathbf{f}}(\bar{\mathbf{x}}_t)
\end{equation}
Now by substitution of the PWA form
of the DNN into~\eqref{eq:DMM_equilibrium}
we get:
\begin{equation}
\label{eq:DMM_equilibrium_pwa}
  \bar{\mathbf{x}}_e = \mathbf{A}_{\mathbf{f}}(\bar{\mathbf{x}}_e) \bar{\mathbf{x}}_e + \mathbf{b}_{\mathbf{f}}(\bar{\mathbf{x}}_e) 
\end{equation}
Where the matrix  $\mathbf{A}_{\mathbf{f}}(\bar{\mathbf{x}}_e)$ and bias vector $\mathbf{b}_{\mathbf{f}}(\bar{\mathbf{x}}_e)$ uniquely define the affine equilibrium dynamics.

For the upper bound in~\eqref{eq:stable_eq_dmm},
we apply operator norm~\eqref{eq:operator_norm}
to the equation~\eqref{eq:DMM_equilibrium_pwa} to obtain:
\begin{equation}
\label{eq:DMM_eq_norm}
 || \bar{\mathbf{x}}_e ||_p = ||  \mathbf{A}_{\mathbf{f}}(\bar{\mathbf{x}}_e) \bar{\mathbf{x}}_e + \mathbf{b}_{\mathbf{f}}(\bar{\mathbf{x}}_e) ||_p
\end{equation}
Then applying triangle inequality~\eqref{eq:operator_norm_subadd} and operator upper bound 
$ ||\mathbf{A} \mathbf{x}||_p \le  ||\mathbf{A}||_p ||\mathbf{x}||_p $
we get:
\begin{equation}
\label{eq:DMM_eq_norm_subadd}
 || \bar{\mathbf{x}}_e ||_p \le ||  \mathbf{A}_{\mathbf{f}}(\bar{\mathbf{x}}_e) ||_p  ||\bar{\mathbf{x}}_e ||_p + || \mathbf{b}_{\mathbf{f}}(\bar{\mathbf{x}}_e) ||_p
\end{equation}
And by applying straightforward algebra we have:
\begin{equation}
\label{eq:DMM_eq_norm_subadd_2}
 (1-||  \mathbf{A}_{\mathbf{f}}(\bar{\mathbf{x}}_e) ||_p ) || \bar{\mathbf{x}}_e ||_p \le   || \mathbf{b}_{\mathbf{f}}(\bar{\mathbf{x}}_e) ||_p
\end{equation}
With resulting equilibrium upper bound given as:
\begin{equation}
\label{eq:DMM_eq_upper}
    ||\bar{\mathbf{x}}_e ||_p \le \frac{||\mathbf{b}_{\mathbf{f}}(\bar{\mathbf{x}}_e) ||_p}{ 1- || \mathbf{A}_{\mathbf{f}}(\bar{\mathbf{x}}_e) ||_p  }
\end{equation}

For deriving the lower bound in~\eqref{eq:stable_eq_dmm}, we start  
with straightforward algebraic operations on~\eqref{eq:DMM_equilibrium_pwa} to obtain equilibrium dynamics in a form:
\begin{equation}
\label{eq:DMM_equilibrium_dyn}
 (\mathbf{I} - \mathbf{A}_{\mathbf{f}}(\bar{\mathbf{x}}_e)) \bar{\mathbf{x}}_e =  \mathbf{b}_{\mathbf{f}}(\bar{\mathbf{x}}_e) 
\end{equation}
Again for two equivalent vectors their  norms must be equal:
\begin{equation}
\label{eq:DMM_equ_norm}
  ||(\mathbf{I} - \mathbf{A}_{\mathbf{f}}(\bar{\mathbf{x}}_e)) \bar{\mathbf{x}}_e ||_p  = ||\mathbf{b}_{\mathbf{f}}(\bar{\mathbf{x}}_e) ||_p
\end{equation}
Now applying operator norm upper bound inequality 
$ ||\mathbf{A} \mathbf{x}||_p \le  ||\mathbf{A}||_p ||\mathbf{x}||_p $ to~\eqref{eq:DMM_equ_norm}
we have:
\begin{align}
\label{eq:DMM_equilibrium_pwa_norm}
      ||(\mathbf{I} - \mathbf{A}_{\mathbf{f}}(\bar{\mathbf{x}}_e)) ||_p ||\bar{\mathbf{x}}_e ||_p  \ge ||\mathbf{b}_{\mathbf{f}}(\bar{\mathbf{x}}_e) ||_p \\
      ||\bar{\mathbf{x}}_e ||_p \ge \frac{||\mathbf{b}_{\mathbf{f}}(\bar{\mathbf{x}}_e) ||_p}{  ||\mathbf{I} - \mathbf{A}_{\mathbf{f}}(\bar{\mathbf{x}}_e) ||_p  }
\end{align}
Then from triangle inequality $||\mathbf{I} - \mathbf{A}_{\mathbf{f}}(\bar{\mathbf{x}}_e) ||_p  \le ||\mathbf{I}|| + ||\mathbf{A}_{\mathbf{f}}(\bar{\mathbf{x}}_e) ||_p$
we obtain a lower bound on the equilibrium norm as follows:
\begin{equation}
\label{eq:DMM_eq_lower}
    ||\bar{\mathbf{x}}_e ||_p \ge \frac{||\mathbf{b}_{\mathbf{f}}(\bar{\mathbf{x}}_e) ||_p}{ 1+ || \mathbf{A}_{\mathbf{f}}(\bar{\mathbf{x}}_e) ||_p  }
\end{equation}
Now clearly if the conditions of Corollary 5 are satisfied then
the  conditions~\eqref{eq:DMM_eq_upper} and~\eqref{eq:DMM_eq_lower}  hold.
\end{proof}

% TODO: check lower bound norms via smallest singular values?
% https://ccse.lbl.gov/Publications/sepp/matrixLowerBound/LBNL-50635.pdf
% https://math.stackexchange.com/questions/3740355/addition-subtraction-of-vectors-regarding-norm

\subsection{Stability Regularizations for Deep Markov Models}

This section proposes additional  regularization methods  for learning stable deep Markov models.
The most direct approach is to include the stability conditions as extra penalties in the DMM loss function.
\begin{equation}
\label{eq:reg:stable1}
    \mathcal{L}_{\texttt{stable}} = \\
 \max\big(1, \|\mathbf{A}_{\mathbf{f}}({\mathbf{x}}) \|_p\big) 
 + \max\big(K, ||\mathbf{A}_{\mathbf{g}}(\mathbf{x})||_p+ \frac{||\mathbf{b}_{\mathbf{g}}(\mathbf{x})||_p}{\| \mathbf{x} \|_p} \big) 
\end{equation}
Then the stability will be enforced by assigning large penalty weights to~\eqref{eq:reg:stable1}.
Additionally, we can bound the norms of the equilibria of DMMs by enforcing constraints~\eqref{eq:stable_eq_dmm} via following penalties.
\begin{equation}
\label{eq:reg:stable2}
    \mathcal{L}_{\texttt{bounded}} = \\
 \max\big(0, \underline{\mathbf{x}} - \frac{ \|\mathbf{b}_{\mathbf{f}}({\mathbf{x}})\|_p }{1- \|\mathbf{A}_{\mathbf{f}}({\mathbf{x}}) \|_p}   \big) 
 + \max\big(0, \frac{ \|\mathbf{b}_{\mathbf{f}}({\mathbf{x}})\|_p }{1- \|\mathbf{A}_{\mathbf{f}}({\mathbf{x}}) \|_p} - \overline{\mathbf{x}}  \big) 
\end{equation}
We leave the empirical validation of the proposed regularizations~\eqref{eq:reg:stable1} and~\eqref{eq:reg:stable2} for future work.

\subsection{Effect of Activation Functions on Stability of Deep Markov Models}
\label{sec:activations}

% \add{comments on nonhomogeneity of the state space due to contractive regions of the activations - explaining why network with unstable weights can have contractive regions}
% \add{elaborate: effect of activation functions - contractive and expanding regions}

As demonstrated by~\citep{massaroli2021dissecting}, different activations generate different phase fields in the context of neural ODEs.
In a similar spirit we explore the phase space behaviors of DMMs using different activations and weight regularizations. We kept the weights to be marginally stable $||\mathbf{A}_i|| \approx 1$, and thereafter vary the activation functions and weight regularization type. In the top row of Fig.~\ref{fig: activations}, we consider SVD based regularization for a $2-$layer DNN $\mathbf{f}_{\theta_\mathbf{f}}(\mathbf{x})$ for mean, and $3-$layer DNN $\mathbf{g}_{\theta_\mathbf{g}}(\mathbf{x})$ for diagonal covariances, and test with \texttt{ReLU, SELU, Softplus}. We can see that both \texttt{ReLU} and \texttt{SELU} produce sufficiently stable behaviors with few state trajectories for \texttt{SELU} remain oscillating near the equilibrium. Similar to \texttt{SELU}, state excursion generated by \texttt{Softplus} networks remain bounded but with higher uncertainty. In the bottom row of Fig.~\ref{fig: activations} we consider tight weight eigenvalue constraints based on Greshgorin discs. For the \texttt{ReLU}, and \texttt{Softplus}, along with the origin, both the axes become attractors, and \texttt{Softplus} produces higher uncertain trajectory oscillations. On the other hand, due to bounded (and contractive) tails \texttt{tanh} produces much more stable behavior with Greshgorin factorization. The behavior of different activations can be explained by the contractivity conditions given as $||\boldsymbol\Lambda^{\mathbf{f}}_{\mathbf{z}_i}||_p \le 1$.
The contractivity of activation functions is uniquely defined by their Lipschitz constants.
For instance, we know that functions with trivial nullspace and Lipschitz constant $\mathcal{K} < 1$, 
such as \texttt{ReLU, tanh},
are asymptotically stable and hence show less uncertain state excursions. However, activations such as \texttt{SELU}, or {Softplus} are not contractive over the entire domain, and therefore, they can generate unstable  dynamics even in the case of contractive weights. 
  \begin{figure*}[ht!]
  \begin{center}
 {\begin{tabular}{ccc}
\includegraphics[width=.20\linewidth,valign=m,trim=60 25 100 00, clip]{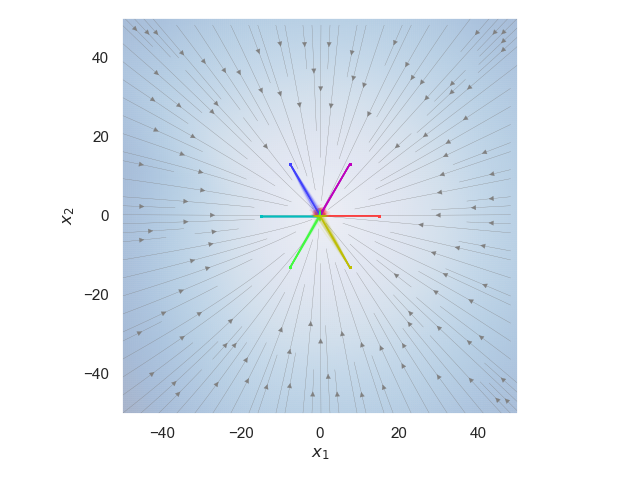} 
& \includegraphics[width=.20\linewidth,valign=m,trim=60 25 100 00, clip]{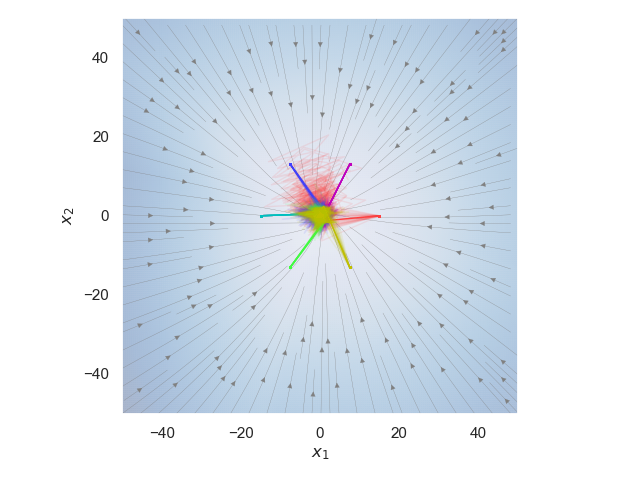} 
& \includegraphics[width=.20\linewidth,valign=m,trim=60 25 100 00, clip]{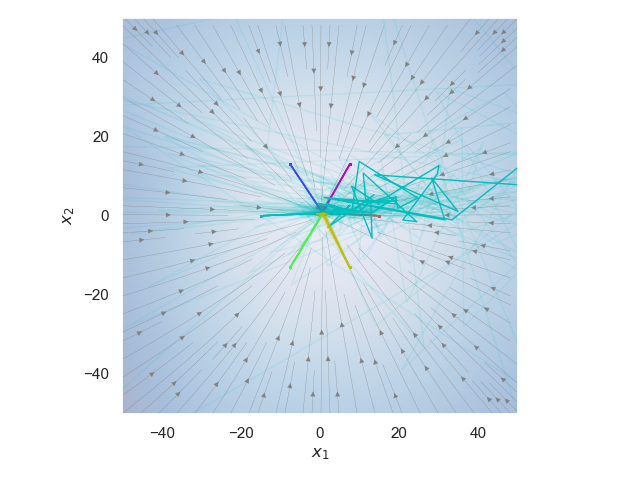}\\
\hspace{0.5cm} SVD and \texttt{ReLU} & 
\hspace{0.3cm} SVD and \texttt{SELU} &   \hspace{0.3cm} SVD and \texttt{Softplus} \\
\includegraphics[width=.20\linewidth,valign=m,trim=60 25 100 00, clip]{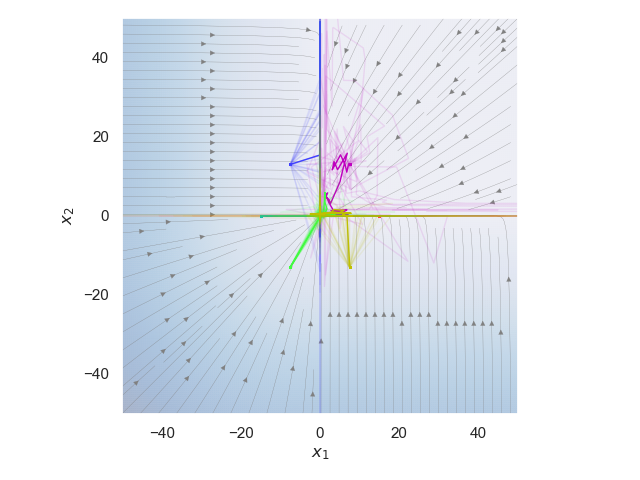}
& \includegraphics[width=.20\linewidth,valign=m,trim=60 25 100 00, clip]{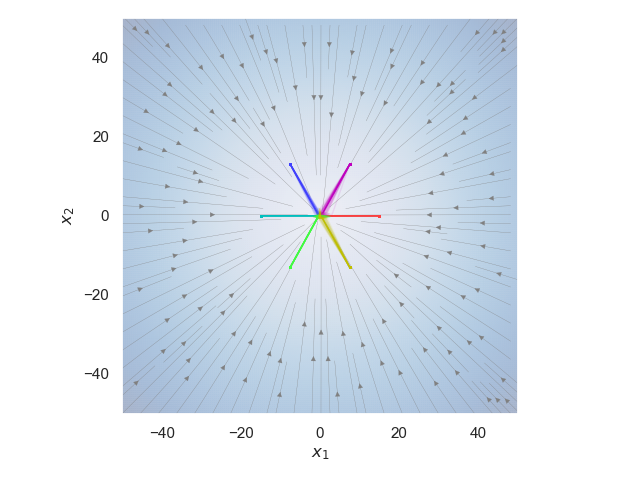} 
& \includegraphics[width=.20\linewidth,valign=m,trim=60 25 100 00, clip]{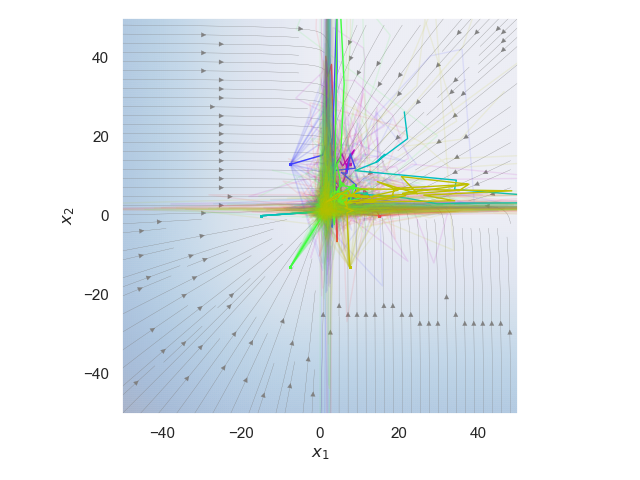}\\
 Greshg. and \texttt{ReLU}  &  \hspace{0.3cm} Greshg.  and \texttt{tanh} &  \hspace{0.3cm}  Greshg. and \texttt{Softplus} \\
\end{tabular}}
\caption{Experiment with different activations for marginally stable weight regularizations using SVD-based and Greshgorin disc-based factorizations. Thin lines are different realizations of the stochastic dynamics with bold lines being their mean trajectory.}
\label{fig: activations}
\end{center}
\end{figure*}

\bibliography{refs}
\bibliographystyle{abbrvnat}
\end{document}